
\documentclass[12pt,3p,times]{elsarticle}
\usepackage{multirow}
\usepackage{setspace}
\usepackage{subfigure}
\usepackage{amsthm}
\newtheorem{lemma}{Lemma}[section]
\usepackage[colorlinks, citecolor=red]{hyperref}
\usepackage{amsmath,amssymb,amsfonts}
\usepackage[linesnumbered,ruled,commentsnumbered]{algorithm2e}
\usepackage{booktabs}
\usepackage{graphicx}
\usepackage{textcomp}
\usepackage{xcolor}
\usepackage{color}
\usepackage{soul}
\usepackage{makecell}
\def\BibTeX{{\rm B\kern-.05em{\sc i\kern-.025em b}\kern-.08em
    T\kern-.1667em\lower.7ex\hbox{E}\kern-.125emX}}

\journal{Pattern Recognition}

\begin{document}
\doublespacing
\begin{frontmatter}

\title{
Kernel Correlation-Dissimilarity for Multiple Kernel k-Means Clustering
}

\author[address1]{Rina Su}\ead{rinasumath@aliyun.com}
\author[address1]{Yu Guo}\ead{yuguomath@aliyun.com}
\author[address1]{Caiying Wu}\ead{wucaiyingyun@163.com}
\author[address1]{Qiyu~Jin\corref{cor1}}
\cortext[cor1]{Corresponding author.}
\ead{qyjin2015@aliyun.com}
\author[address2]{Tieyong~Zeng}\ead{zeng@math.cuhk.edu.hk}

\address[address1]{School of Mathematical Science, Inner Mongolia University, Hohhot, China }
\address[address2]{Department of Mathematics, The Chinese University of Hong Kong, Satin, Hong Kong}

\begin{abstract}
The main objective of the Multiple Kernel k-Means (MKKM) algorithm is to extract non-linear information and achieve optimal clustering by optimizing base kernel matrices. Current methods enhance information diversity and reduce redundancy by exploiting interdependencies among multiple kernels based on correlations or dissimilarities. Nevertheless, relying solely on a single metric, such as correlation or dissimilarity, to define kernel relationships introduces bias and incomplete characterization. Consequently, this limitation hinders efficient information extraction, ultimately compromising clustering performance. To tackle this challenge, we introduce a novel method that systematically integrates both kernel correlation and dissimilarity. Our approach comprehensively captures kernel relationships, facilitating more efficient classification information extraction and improving clustering performance. By emphasizing the coherence between kernel correlation and dissimilarity, our method offers a more objective and transparent strategy for extracting non-linear information and significantly improving clustering precision, supported by theoretical rationale. We assess the performance of our algorithm on 13 challenging benchmark datasets, demonstrating its superiority over contemporary state-of-the-art MKKM techniques.

\end{abstract}

\begin{keyword}
    k-means, multiple kernel learning, consistency, Forbenius inner product, Manhattan Distance
\end{keyword}

\end{frontmatter}

\section{Introduction}

Clustering is a common practice in both machine learning and data mining, involving the categorization of data points based on their inherent similarity \cite{2023178kmeans_survey}. It is widely used in image processing \cite{yang2019deep} and pattern recognition \cite{2023164SurveyClustering}.
Over the past few decades, numerous clustering methodologies have been introduced, including k-means clustering \cite{1979}, spectral clustering \cite{2001spectral}, maximum margin clustering \cite{2004margin}, and density-based clustering \cite{1996dbscan}. Among these methods, k-means clustering \cite{1979} has emerged as a widely adopted unsupervised learning algorithm for grouping data points. This well-established approach utilizes the Euclidean distance metric to assess the similarity between samples, effectively clustering the closest points together \cite{1967kmeans}.

In recent years, several extensions to the k-means clustering algorithm have been proposed. For instance, a novel unsupervised k-means clustering algorithm was proposed, capable of automatically discovering the optimal cluster solution without the need for initialization or parameter selection \cite{2020kmeams}. Another effective clustering algorithm based on robust deep k-means was introduced. It demonstrates robustness under various layer sizes and scatter function settings, achieving outstanding clustering performance \cite{2021robustkmeans}. Additionally, a novel and highly scalable parallel clustering algorithm was presented, which combines the k-means and k-means++ \cite{2007kmeansplus} algorithms to cluster large-scale datasets and address the challenge of processing big data \cite{kmeans2023}. 
However, these algorithms exhibit limited effectiveness in clustering linearly non-separable data.

Two effective strategies offer potential solutions for these challenges. One strategy involves deep clustering, a robust technique that captures nonlinear structures within unsupervised data by merging embedding and clustering to establish an optimal nonlinear embedding space. For instance, Yang et al. \cite{2020adversarial_deepC} utilized adversarial learning methods to bolster the resilience of deep clustering. Their emphasis centered on employing adversarial training strategies to fortify deep clustering algorithms against perturbations and adversarial attacks. Diallo et al. \cite{2023PR_mvdc} introduced an auto-attention mechanism for multi-view deep embedding clustering, seeking to enhance clustering performance through the automatic assignment of varying attention weights to informative views. Additionally, Yang et al. \cite{2023DeepMCC} proposed an innovative deep multi-view clustering model integrating feature learning with clustering. This model aims to optimize representations from various viewpoints using a collaborative learning framework, enabling a more effective and comprehensive capture of complex data structures. However, challenges persist within deep clustering despite its notable achievements, including issues with interpretability and high computational complexity.

Alternatively, kernel k-means (KKM) clustering, known for its adeptness in capturing intricate structures \cite{2021289MKC}, presents an alternative effective strategy. The KKM algorithm aims to map data points to a high-dimensional feature space to achieve linear separability and subsequently utilizes the Euclidean distance metric to partition them \cite{2012KKM}. The kernel matrix is constructed from the inner products of the mapped high-dimensional data points \cite{2002kernel}, allowing the derivation of direct high-dimensional distances from the original data by combining the computation of high-dimensional mappings and inner products \cite{ZHAO2018247}. A variety of kernel functions generate diverse kernel matrices, resulting in distinct clustering outcomes. These options provide researchers with opportunities to explore innovative approaches and potential solutions \cite{2022503KKM}.

KKM-based clustering methods, despite their extensive utilization, encounter specific challenges \cite{2021multiviewsurvey}.
One primary challenge arises from the diverse nature of real-world datasets, often comprising multiple features, sources, or relationships that capture varied information and offer additional insights \cite{2009languages}.
For instance, documents might be represented in multiple languages, while web pages can provide both inherent views and external perspectives through anchor texts from inbound hyperlinks \cite{2004multi-view}.
To address this, multi-view clustering is frequently employed.
Another challenge involves the linear separability of original data in high-dimensional space, heavily reliant on the selection of kernel functions \cite{2021multiviewsurvey}.
Furthermore, the suitability of a chosen kernel function for a specific dataset is often uncertain \cite{2009languages}, leading to suboptimal clustering performance with inappropriate kernel selections \cite{2013multiview}.
Recent advancements in alternative kernel methods have introduced the concept of generating multiple kernels, known as Multiple Kernel Clustering (MKC) \cite{2004multikernel}.
This approach liberates itself from reliance on a single fixed kernel by generating multiple homogeneous or heterogeneous kernels, thereby extracting more comprehensive information from complementary datasets and enhancing clustering performance \cite{2007multikernel}.

Over the last decade, numerous MKC techniques have been developed to improve clustering algorithm performance \cite{2021multiviewsurvey}. These techniques directly utilize kernel fusion, encompassing two primary types of clustering methods: multiple kernel k-means clustering (MKKM) and multiple kernel spectral clustering \cite{2014MKspectral}. The core concept of MKKM is to precompute kernel matrices for all features and integrate the process of linear combinatorial learning of these matrices into the clustering process, forming a unified framework \cite{2012MKFC}.

Multiple kernel learning (MKL) entails employing multiple kernel functions to extract information from diverse features, thus enhancing the model's performance and generalization capability. Liu et al. \cite{2022410MKC} introduced a novel MKC approach that extends kernel evaluation-based MKL methods by integrating clustering and multiple kernel tasks into a unified optimization problem, employing a kernel evaluation metric named center kernel alignment. Nevertheless, MKC encounters diverse challenges, encompassing the selection of the optimal kernel function and the reduction of redundant information among kernels. To tackle these challenges, numerous methods based on MKKM have been suggested.

Gönen et al. \cite{2014LMKKM} introduced a localized data fusion strategy for MKKM clustering, allocating distinct weights to each kernel to capture the distinctive attributes and data noise for individual samples. \cite{2015RMKKM} utilized the $l_{21}$-norm to gauge the distance between data points and cluster centers, augmenting the method's resilience to noise and outliers. Liu et al. \cite{2016MR} introduced the notion of correlation to quantify the relationship between kernels, thereby decreasing the probability of assigning substantial weights to highly correlated kernel pairs by penalizing those that are highly redundant. Yao et al. \cite{2021RK} presented a procedure for choosing distinct kernel subsets for the composite kernel, considerably amplifying kernel diversity. Liu et al. \cite{SimpleMKKM} introduced the SimpleMKKM algorithm, rephrasing the min-max formula as a parameter-free minimization of the optimal value function, and devised a streamlined gradient descent algorithm to calculate the optimal solution.

Current approaches predominantly focus on mitigating kernel redundancy by employing correlation, as quantified through the Frobenius inner product, or dissimilarity, assessed via the squared Frobenius norm distance. While these terms are theoretically expected to exhibit negative correlation, practical scenarios might deviate from this expectation, potentially even leading to the absence of correlation. This underscores the imperative of incorporating both kernel correlation and dissimilarity within clustering tasks, integrating them into a unified framework. This strategy facilitates the efficient reduction of redundant information and the augmentation of kernel diversity.

To summarize, this paper provides the following contributions:
\begin{enumerate} 
\item
We introduce an innovative multiple kernel clustering model that incorporates kernel correlation and dissimilarity information to enhance clustering accuracy. By treating kernels as data points in a high-dimensional space and utilizing the Manhattan distance and Frobenius inner product to quantify dissimilarity and correlation, we effectively reduce kernel redundancy. This approach not only boosts the efficiency and accuracy of multiple kernel clustering but also enhances its overall performance and generalization potential.

\item
To enhance the efficiency of the solution process, we decompose the proposed model into convex subproblems and establish their convex nature. Building on this model decomposition, we devise an alternating minimization algorithm to solve it. This algorithm stands out due to its simplicity, rapid convergence, and ability to yield improved clustering results.

\item
We rigorously assess the performance of our approach through extensive comparative simulation experiments against seven leading state-of-the-art multiple kernel clustering (MKC) methods on 13 challenging datasets. Furthermore, we conduct a comprehensive analysis of the obtained outcomes.
\end{enumerate} 

The paper is structured as follows. Section \ref{section2} offers a concise overview of established MKKM algorithms. In Section \ref{section3}, we introduce our proposed model and outline the alternating optimization process. Section \ref{section4} presents experimental results, where we compare our method with seven state-of-the-art MKKM algorithms using 13 real-world datasets. The paper concludes in Section \ref{section5} by summarizing our contributions and discussing potential directions for future research.

\section{Preliminaries}
\label{sec_notation}

In this section, we introduce some necessary notations and preliminaries. Throughout this paper, matrices, vectors, and scalars are denoted by bold uppercase, bold lowercase, and lowercase letters, respectively. $\| \cdot \|_2$ denotes $\ell_2$ norm. 
The notation details are summarized in Table \ref{tab_notation}.
\begin{table}[!t]
    \caption{Description of the used notations.}
    \begin{center}
    \resizebox{\textwidth}{!}{
        \begin{tabular}{l l | l l  }
            \hline
            Notations & Meaning & Notations & Meaning \\
            \hline
            $n$ & number of samples & $\mathrm{diag}(n_1,n_2,\dots,n_c)$ & diagonal matrix with diagonal elements $n_1,n_2,\dots,n_c$ \\
            $d$ & feature of samples & $w_p$ & the $p$-th element of vector $\mathbf{w}$ \\
            $m$ & number of kernels & $\pmb{\mu}_c$ & the center vector of the $c$-th cluster \\
            $k$ & number of clusters & $\mathbf{X}\in \mathbb{R}^{d\times n}$ & data matrix \\
            $\mathbf{K}_p$ & the $p$-th kernel matrix & $\mathbf{A}(i,c)$ & the $(i,c)$-th element of matrix $\mathbf{A}$ \\
            $\mathbf{Z}$ & the cluster indicator matrix & $\mathbf{A}(p,:)$ & the $p$-th row of matrix $\mathbf{A}$ \\
            $\mathbf{1}_n$ & identity matrix & $\mathrm{Tr}(\mathbf{A})$ & trace of matrix $\mathbf{A}$ \\
            $n_c$ & the size of the cluster $\mathbf{C}_c$ & $\mathbf{C}_c$ & a data set for the $c$-th cluster \\
            \hline
        \end{tabular}}
        \label{tab_notation}
    \end{center}
\end{table}

\section{Related work}
\label{section2}

Let $\mathbf{X}=\{\mathbf{x}_i\}_{i=1}^n\ (\mathbf{x}_i\in \mathbb{R}^{d})$ be a dataset consisting of samples, and the goal of clustering is to divide the samples into $k$ clusters $\mathbf{C}=\{\mathbf{C}_1,\mathbf{C}_2,\dots,\mathbf{C}_k\}$ using a cluster indicator matrix $\mathbf{Z}\in \{0,1\}^{n\times k}$.
To introduce our research, this section recalls three classical clustering methods that are closely related to our study: Kernel k-Means (KKM) clustering \cite{2012KKM}, Multiple Kernel k-Means (MKKM) clustering \cite{2012MKFC}, and a significant MKKM-based algorithm called Matrix-Induced Regularization for MKKM (MKKM-MR) \cite{2016MR}.

\subsection{Kernel k-Means (KKM)}
\label{subsec_kkm}
To address linearly non-separable data and capture complex structures, the work in \cite{2012KKM} proposed a kernel trick for k-means, which resulted in improved clustering performance. 
By defining $\pmb{\phi}(\cdot):\mathbb{R}^d\rightarrow \mathcal{H}$  as a mapping function that maps data point $\mathbf{x}$ to a reproducing kernel Hilbert space $\mathcal{H}$, the objective of KKM is to minimize the sum of squared errors. The KKM model can be mathematically formulated as follows \cite{2012KKM}:

\begin{equation}
    \min_{\mathbf{Z}\in \{0,1\}^{n\times k}}\sum_{i=1}^n\sum_{c=1}^k \mathbf{Z}(i,c)\|\pmb{\phi}(\mathbf{x}_i)-\pmb{\mu}_c\|_2^2,\ \mathrm{s.t.}\ \sum_{c=1}^k\mathbf{Z}(i,c)=1,
    \label{eq2}
\end{equation}
where  the $\mathbf{Z}(i,c)$ 
represents the $(i,c)$-th element of matrix $\mathbf{Z}$, and $\pmb{\mu}_c=\frac{1}{n_c}\sum_{i=1}^n \mathbf{Z}(i,c)\pmb{\phi}(\mathbf{x}_i)$, $n_c=\sum_{i=1}^n \mathbf{Z}(i,c)$.
By defining the kernel function $\kappa(\mathbf{x}_i, \mathbf{x}_j) = \pmb{\phi}(\mathbf{x}_i)^\top\pmb{\phi}(\mathbf{x}_j)$ for $i, j = 1, 2, \dots, n$, problem (\ref{eq2}) can be equivalently expressed in matrix form as follows:
\begin{equation}
\min_{\mathbf{Z}\in \{0,1\}^{n\times k}} \mathrm{Tr}\left(\mathbf{K}-\mathbf{K}(\mathbf{ZL}^{\frac{1}{2}})(\mathbf{ZL}^{\frac{1}{2}})^\top\right),\
\mathrm{s.t.}\ \mathbf{Z1}_k=\mathbf{1}_n,
\label{eq3}
\end{equation}
where $\mathbf{L}=\mathrm{diag}\left(\frac{1}{n_1},\frac{1}{n_2},\dots,\frac{1}{n_k}\right)$, and the kernel matrix is defined as $\mathbf{K}(i,j) = \kappa(\mathbf{x}_i,\mathbf{x}_j)$ for $i, j = 1, 2, \dots, n$.

The proof detailing the transition from eq. (\ref{eq2}) to (\ref{eq3}) is available in \cite{2015prove}. 
By introducing $\mathbf{H}=\mathbf{ZL}^{\frac{1}{2}}$ and enabling it to take continuous real values, the core objective of the KKM method is expressed as:
\begin{equation}
\begin{split}
    &\min_{\mathbf{H}\in \mathbb{R}^{n\times k}} \mathrm{Tr}(\mathbf{K}(\mathbf{I}_n-\mathbf{HH}^\top)),\\
    &\mathrm{s.t.}\ \mathbf{H}^\top \mathbf{H}=\mathbf{I}_k.
    \label{eq4}
\end{split}
\end{equation}
The optimal solution $\mathbf{H^*}$ for problem (\ref{eq4}) comprises $k$ eigenvectors of the kernel matrix $\mathbf{K}$ corresponding to the $k$ largest eigenvalues.

\subsection{Multiple Kernel k-Means (MKKM)}

By introducing the kernel, KKM effectively solves the problem of linearly non-separable data and obtains good clustering results. However, the performance of KKM depends on the pre-selected kernel matrix. In practical applications, the lack of prior knowledge makes it difficult to determine the most suitable kernel matrix. To address this challenge, Huang et al. \cite{2012MKFC} proposed a solution that involves substituting the original kernel matrix $\mathbf{K}$ with a linear combination $\mathbf{K_w}$ of multiple base kernels. This approach, known as Multiple Kernel k-Means clustering, aims to supplement the limitations of the single kernel clustering approach.
Given a collection of multiple base kernels $\mathcal K=\{\mathbf{K}_p\}_{p=1}^m$, the kernel function of multiple kernels is defined as:
\begin{equation}
    \kappa_\mathbf{w}(\mathbf{x}_i,\mathbf{x}_j)=\pmb{\phi}_{\mathbf{w}}(\mathbf{x}_i)^\top\pmb{\phi}_{\mathbf{w}}(\mathbf{x}_j)=\sum_{p=1}^{m}w_p^2\pmb{\phi}_p(\mathbf{x}_i)^\top\pmb{\phi}_p(\mathbf{x}_j)=
    \sum_{p=1}^{m}w_p^2\mathbf{K}_p(i,j), \quad i,j = 1,2,\cdots, n,
\end{equation}
where the combined kernel matrix is denoted as $\mathbf{K_w}(i,j) = \kappa_\mathbf{w}(\mathbf{x}_i,\mathbf{x}_j)$ for $i,j=1,2,\dots,n$, consists of kernel weights represented by the vector $\mathbf{w}$, where $\mathbf{w}^\top \mathbf{1}_m=1$ and $w_p\ge 0$ for $p=1, 2, \dots, m$. Additionally, $\pmb{\phi}_\mathbf{w}(\cdot)$ refers to a collection of mapping ensembles defined as $\pmb{\phi}_\mathbf{w}(\mathbf{x})=[w_1\pmb{\phi}_1(\mathbf{x})^\top,w_2\pmb{\phi}_2(\mathbf{x})^\top,\dots,w_m\pmb{\phi}_m(\mathbf{x})^\top]^\top$.

Building upon the formulation presented in problem (\ref{eq4}), the objective of MKKM can be expressed as follows:
\begin{equation}
\begin{split}
    & \min_{\mathbf{H}\in \mathbb{R}^{n\times k},\mathbf{w}\in \mathbb{R}_+^m}\mathrm{Tr}(\mathbf{K_w}(\mathbf{I}_n-\mathbf{HH}^\top))\\
    & \mathrm{s.t.}\ \mathbf{H}^\top \mathbf{H}=\mathbf{I}_k,\ \mathbf{w}^\top \mathbf{1}_m=1,\ w_p\ge 0,\ \forall p=1, 2, \dots, m.
    \label{eq7}
\end{split}
\end{equation}
The mentioned problem (\ref{eq7}) encompasses 2 variables, $\mathbf{H}$ and $\mathbf{w}$, which can be addressed using an alternating approach. Given $\mathbf{w}$, the optimization problem (\ref{eq7}) concerning $\mathbf{H}$ can be presented as:
\begin{equation}
\begin{split}
    &\min_{\mathbf{H}\in \mathbb{R}^{n\times k}}
    \mathrm{Tr}(\mathbf{K_w(I}_n-\mathbf{HH}^\top)),\\
    &\mathrm{s.t.}\ \mathbf{H}^\top \mathbf{H}=\mathbf{I}_k.
\end{split}
\end{equation}
This corresponds to the classical KKM algorithm, and its optimal solution is determined in the same manner as described in (\ref{eq4}).
When $\mathbf{H}$ is given, the optimization problem (\ref{eq7}) concerning $\mathbf{w}$ transforms to:
\begin{equation}
\begin{split}
    &\min_{\mathbf{w}\in \mathbb{R}_+^m}\mathbf{w^\top Bw},\\
    &\mathrm{s.t.}\ \mathbf{w}^\top\mathbf{1}_m=1,\ w_p\ge 0,\ \forall p=1, 2, \dots, m.
\end{split}
\end{equation}
Here, $\mathbf{B}=\mathrm{diag}\left(\mathbf{Tr}(\mathbf{K}_1(\mathbf{I}_n-\mathbf{HH}^\top)),\mathbf{Tr}(\mathbf{K}_2(\mathbf{I}_n-\mathbf{HH}^\top)),\dots,\mathbf{Tr}(\mathbf{K}_m(\mathbf{I}_n-\mathbf{HH}^\top))\right)$, and it represents a quadratic programming (QP) problem with linear constraints. MKKM extracts a more extensive range of data information compared to KKM. It effectively addresses the limitations of single kernel clustering, resulting in improved clustering performance.

\subsection{MKKM with Matrix-Induced Regularization (MKKM-MR)}
The MKKM model faces the challenge of redundant kernel information. Liu et al. \cite{2016MR} introduced kernel correlation learning and proposed the MKKM-MR algorithm to tackle the challenge of substantial redundancy among kernels through matrix-induced regularization. This approach aids in alleviating the issue of concurrently assigning substantial weights to highly correlated pairwise kernels.
\begin{equation}
\begin{split}
    & \min_{\mathbf{H}\in \mathbb{R}^{n\times k},\mathbf{w}\in \mathbb{R}_+^m}\mathrm{Tr}(\mathbf{K}_\mathbf{w}(\mathbf{I}_n-\mathbf{HH}^\top))+\frac{\lambda}{2}\mathbf{w}^\top \mathbf{Mw}\\
    & \mathrm{s.t.}\ \mathbf{H}^\top \mathbf{H}=\mathbf{I}_k,\ \mathbf{w}^\top \mathbf{1}_m=1,\ w_p\ge 0,\ \forall p=1, 2, \dots, m,
    \label{eq11}
\end{split}
\end{equation}
where the symbol $\lambda$ represents the regularization parameter, and $\mathbf{M}\in \mathbb{R}^{m\times m}$ denotes the kernel correlation matrix, which is defined as 
$$
\mathbf{M}(p,q)=\mathrm{Tr}(\mathbf{K}_p^\top \mathbf{K}_q), \ \forall p,q = 1,2,\cdots, m
$$

The problem (\ref{eq11}) encompasses two variables, $\mathbf{H}$ and $\mathbf{w}$, and can be addressed using the alternating minimization method. When $\mathbf{w}$ is provided, the problem (\ref{eq11}) concerning $\mathbf{H}$ can be solved by utilizing the eigenvectors of the kernel matrix $\mathbf{K_w}$, as depicted in (\ref{eq4}).
Conversely, if $\mathbf{H}$ is provided, the problem (\ref{eq11}) related to $\mathbf{w}$ can be formulated as a quadratic programming (QP) problem with linear constraints:
\begin{equation}
\begin{split}
    & \min_{\mathbf{w}\in \mathbb{R}_+^m}\frac{1}{2}\mathbf{w}^\top (2\mathbf{B}+\lambda \mathbf{M})\mathbf{w},\\
    & \mathrm{s.t.}\ \mathbf{w}^\top \mathbf{1}_m=1,\ w_p\ge 0,\ \forall p=1, 2, \dots, m,
\end{split}
\end{equation}
where $\mathbf{B}=\mathrm{diag}\left(\mathbf{Tr}(\mathbf{K}_1(\mathbf{I}_n-\mathbf{HH}^\top)),\mathbf{Tr}(\mathbf{K}_2(\mathbf{I}_n-\mathbf{HH}^\top)),\dots,\mathbf{Tr}(\mathbf{K}_m(\mathbf{I}_n-\mathbf{HH}^\top))\right)$. This quadratic programming (QP) problem can be effectively solved using a specialized quadratic programming solver. 
Liu et al. \cite{2016MR} successfully reduced kernel redundancy by penalizing significant correlations within the kernel matrix. This approach resulted in improved clustering accuracy.

\section{Kernel Correlation-Dissimilarity for Multiple kernel k-Means Clustering}
\label{section3}

This section delves into the relationship between kernel correlation and dissimilarity, utilizing the ProteinFold dataset\footnote{https://xinwangliu.github.io/} as an illustrative example. This dataset consists of  694 samples and 12 kernel matrices. The kernel matrices, denoted as $\mathbf{K}_p \in \mathbb{R}^{694\times 694}$ for $ p=1,2,\dots,12$, are constructed using 7 Gaussian kernels $\exp(-\Vert \mathbf{x}_i-\mathbf{x}_j \Vert^2 / 2\sigma^2)$, 4 polynomial kernels $(a+\mathbf{x}_i^\top \mathbf{x}_j)^b$ and 1 linear kernel $(\mathbf{x}_i^\top \mathbf{x}_j)/(\Vert \mathbf{x}_i \Vert _2\cdot \Vert \mathbf{x}_j \Vert _2)$ functions, where $\mathbf{x}_i$, $\mathbf{x}_j$ denotes the original data points for $i,j=1,2,\dots,694$. Detailed parameterization is shown in the dataset. Kernel correlation is defined in the Frobenius inner product, i.e. 
\begin{equation}
    \mathbf{M}(p,q)=\mathrm{Tr}(\mathbf{K}_p^\top \mathbf{K}_q), \ \forall p,q = 1,2,\cdots, m. \label{eq_Mij}
\end{equation}
Next, we use distances to portray dissimilarity. Considering that the Manhattan distance is more robust than the Euclidean distance and produces fewer errors at outliers, we use the Manhattan distance to measure the kernel dissimilarity $\mathbf{D}$, i.e.
\begin{equation}
    \mathbf{D}(p,q)=\sum_{i,j=1}^n\left |\ \mathbf{K}_p{(i,j)}-\mathbf{K}_q{(i,j)}\right |,\ \forall p,q = 1,2,\cdots, m.
    \label{eq_dij}
\end{equation}
We compared the kernel correlation and dissimilarity between the 12 kernels separately. The findings are elaborated in Table \ref{cor_dis_compare}, unveiling the subsequent insights:

\begin{table}
    \centering
    \caption{Kernel dissimilarity  \& correlation on  ProteinFold dataset. The least dissimilarity is marked in {\color{blue} blue} and the second least dissimilarity is marked in {\color{red} red}. The marker with the highest correlation is in \textbf{bold}.
    }
    \renewcommand\arraystretch{1.1}
    \resizebox{\textwidth}{!}{
    \begin{tabular}{c|c c c c c c c c c c c c}
        \hline
         Datasets & $\mathbf{K}_{1}$ & $\mathbf{K}_{2}$ & $\mathbf{K}_{3}$ & $\mathbf{K}_{4}$ & $\mathbf{K}_{5}$ & $\mathbf{K}_{6}$ & $\mathbf{K}_{7}$ & $\mathbf{K}_{8}$ & $\mathbf{K}_{9}$ & $\mathbf{K}_{10}$ & $\mathbf{K}_{11}$ & $\mathbf{K}_{12}$ \\
           \hline
         $\mathbf{K}_{1}$ & \textcolor{blue}{0.0} \& 4.5 & 6.5 \& 3.4 & 3.7 \& 2.5 & 4.1 \& 2.9 & 3.9 \& 2.6 & 3.8 \& 2.7 & \textcolor{red}{1.5} \& 3.2 & 2.0 \& 2.5 & 2.6 \& 1.9 & 25.6 \& \textbf{5.6} & 2.6 \& 1.2 & 3.0 \& 1.5 \\
         $\mathbf{K}_{2}$ & 6.5 \& 3.4 & \textcolor{blue}{0.0} \& \textbf{32.1} & 6.9 \& 5.1 & 6.8 \& 7.0 & 6.9 \& 5.3 & 6.8 \& 5.9 & 6.6 \& 3.0 & \textcolor{red}{6.5} \& 2.9 & 6.6 \& 3.2 & 24.4 \& 22.7 & 6.8 \& 1.4 & 7.0 \& 2.4 \\
         $\mathbf{K}_{3}$ & 3.7 \& 2.5 & 6.9 \& 5.1 & \textcolor{blue}{0.0} \& 11.4 & 4.2 \& 6.4 & \textcolor{red}{2.4} \& 9.5 & 4.2 \& 5.2 & 3.8 \& 2.3 & 3.8 \& 2.1 & 3.9 \& 2.4 & 24.9 \& \textbf{14.8} & 4.0 \& 1.1 & 4.3 \& 1.7 \\
         $\mathbf{K}_{4}$ & 4.1 \& 2.9 & 6.8 \& 7.0 & 4.2 \& 6.4 & \textcolor{blue}{0.0} \& 14.2 & 4.0 \& 7.1 & \textcolor{red}{2.8} \& 10.3 & 4.1 \& 2.6 & 4.1 \& 2.6 & 4.1 \& 3.1 & 24.3 \& \textbf{22.1} & 4.4 \& 1.2 & 4.7 \& 2.0 \\
         $\mathbf{K}_{5}$ & 3.9 \& 2.6 & 6.9 \& 5.3 & \textcolor{red}{2.4} \& 9.5 & 4.0 \& 7.1 & \textcolor{blue}{0.0} \& 12.2 & 4.1 \& 5.9 & 3.9 \& 2.3 & 3.9 \& 2.1 & 4.0 \& 2.4 & 24.8 \& \textbf{15.5} & 4.2 \& 1.1 & 4.5 \& 1.7 \\
         $\mathbf{K}_{6}$ & 3.8 \& 2.7 & 6.8 \& 5.9 & 4.2 \& 5.2 & \textcolor{red}{2.8} \& 10.3 & 4.1 \& 5.9 & \textcolor{blue}{0.0} \& 12.4 & 3.9 \& 2.5 & 3.9 \& 2.4 & 3.9 \& 2.8 & 24.6 \& \textbf{18.6} & 4.1 \& 1.2 & 4.4 \& 1.9 \\
         $\mathbf{K}_{7}$ & \textcolor{red}{1.5} \& 3.2 & 6.6 \& 3.0 & 3.8 \& 2.3 & 4.1 \& 2.6 & 3.9 \& 2.3 & 3.9 \& 2.5 & \textcolor{blue}{0.0} \& 4.4 & \textcolor{red}{1.5} \& 3.2 & 2.4 \& 2.3 & 25.6 \& \textbf{6.1} & 2.5 \& 1.3 & 3.0 \& 1.7\\
         $\mathbf{K}_{8}$ & 2.0 \& 2.5 & 6.5 \& 2.9 & 3.8 \& 2.1 & 4.1 \& 2.6 & 3.9 \& 2.1 & 3.9 \& 2.4 & \textcolor{red}{1.5} \& 3.2 & \textcolor{blue}{0.0} \& 4.0 & 2.1 \& 2.7 & 25.5 \& \textbf{6.8} & 2.5 \& 1.2 & 2.9 \& 1.6 \\
         $\mathbf{K}_{9}$ & 2.6 \& 1.9 & 6.6 \& 3.2 & 3.9 \& 2.4 & 4.1 \& 3.1 & 4.0 \& 2.4 & 3.9 \& 2.8 & 2.4 \& 2.3 & \textcolor{red}{2.1} \& 2.7 & \textcolor{blue}{0.0} \& 4.6 & 25.0 \& \textbf{14.0} & 2.7 \& 1.1 & 3.2 \& 1.5\\
         $\mathbf{K}_{10}$ & 25.6 \& 5.6 & 24.4 \& 22.7 & 24.9 \& 14.8 & \textcolor{red}{24.3} \& 22.1 & 24.8 \& 15.5 & 24.6 \& 18.6 & 25.6 \& 6.1 & 25.5 \& 6.8 & 25.0 \& 14.0 & \textcolor{blue}{0.0} \& \textbf{302.4} & 26.1 \& 0.2 & 26.1 \& 0.3 \\
         $\mathbf{K}_{11}$ & 2.6 \& 1.2 & 6.8 \& 1.4 & 4.0 \& 1.1 & 4.4 \& 1.2 & 4.2 \& 1.1 & 4.1 \& 1.2 & 2.5 \& 1.3 & 2.5 \& 1.2 & 2.7 \& 1.1 & 26.1 \& 0.2 & \textcolor{blue}{0.0} \& 3.7 & \textcolor{red}{1.7} \& \textbf{3.9} \\
         $\mathbf{K}_{12}$ & 3.0 \& 1.5 & 7.0 \& 2.4 & 4.3 \& 1.7 & 4.7 \& 2.0 & 4.5 \& 1.7 & 4.4 \& 1.9 & 3.0 \& 1.7 & 2.9 \& 1.6 & 3.2 \& 1.5 & 26.1 \& 0.3 & \textcolor{red}{1.7} \& 3.9 & \textcolor{blue}{0.0} \& \textbf{6.8} \\
         \hline
    \end{tabular}}
    \label{cor_dis_compare}
\end{table}

\begin{itemize}
    \item When considering $\mathbf{K}_1$ as an illustration from the table, the utilization of the Frobenius inner product to gauge kernel correlation revealed that the correlation between $\mathbf{K}_1$ and itself is not as pronounced as the correlation between $\mathbf{K}_1$ and $\mathbf{K}_{10}$. Similarly, $\mathbf{K}_3, \mathbf{K}_4, \mathbf{K}_5, \mathbf{K}_6, \mathbf{K}_7, \mathbf{K}_8, \mathbf{K}_9,$ and $\mathbf{K}_{11}$ also exhibited comparable patterns. This indicates that relying exclusively on the Frobenius inner product for measuring kernel similarity falls short of capturing their underlying relationships comprehensively. As a result, this approach inadequately deals with the redundancy of information among multiple kernels, ultimately resulting in a reduction in MKKM clustering accuracy.
    
    \item  Kernel $\mathbf{K}_{10}$ exhibits a slight variation in dissimilarity when compared to the other kernels, with values ranging between 24.3 and 26.1. In contrast, the correlation between kernel $\mathbf{K}_{10}$ and the remaining kernels demonstrates notable fluctuations, covering a range from 0.2 to 22.7. This contrast highlights that kernel correlation and dissimilarity encapsulate different informational facets, emphasizing their distinct assessment of kernel relationships.
    
\end{itemize}

Based on previous research and the observations mentioned earlier, we hypothesize that combining kernel correlation and kernel dissimilarity to assess kernel relationships could lead to improved evaluations, thereby further enhancing clustering accuracy.

\subsection{Proposed Approach}
We present an MKKM model that includes kernel dissimilarity and correlation as regularization terms, formulated as follows:
\begin{equation}
\begin{split}
    & \min_{\mathbf{Y}\in \mathbb{R}^{m\times m},\mathbf{H}\in \mathbb{R}^{n\times k}}\   \mathrm{Tr}\left(\mathbf{K_Y}\left(\mathbf{I}_n-\mathbf{HH}^\top\right)\right)+ \alpha \left(\frac{\mathbf{Y1}_m}{m}\right)^\top \mathbf{M}\left(\frac{\mathbf{Y1}_m}{m}\right)+\beta \mathrm{Tr}(\mathbf{D}^\top \mathbf{Y})\\
    & \mathrm{s.t.}\ \mathbf{H^\top H=I}_k,\ \mathbf{1}_m^\top \mathbf{Y=1}_m^\top,\ \ \mathbf{Y}(p,q)\ge 0,\ \forall p, q=1, 2, \dots, m,
    \label{eq_KCDCMKKM}
\end{split}
\end{equation}
where $\alpha \in \{0.1, 0.2, \dots, 0.9\}$ and $\beta \in \{2^{-14},$ $ 2^{-13}, \dots, 2^{-5}\}$ are the penalty parameters for kernel correlation and kernel dissimilarity, 
 the kernel weight vector and the combined kernel are defined as follows:
\begin{equation}
    \mathbf{w}=\frac{\mathbf{Y1}_m}{m},
    \label{update_w}
\end{equation}
\begin{equation}
    \mathbf{K_Y}=\sum_{p=1}^mw_p^2\cdot \mathbf{K}_p=\frac{1}{m^2}\sum_{p=1}^m(\mathbf{Y}(p,:)\mathbf{1}_m)^2\cdot \mathbf{K}_p,
    \label{updateK_Y}
\end{equation}
the $\mathbf{M}$ represents the kernel correlation, which is defined in \eqref{eq_Mij}, $\mathbf{Y}\in \mathbb{R}^{m\times m}$ signifies the representation matrix corresponding to the kernel dissimilarity matrix $\mathbf{D}$. This matrix $\mathbf{Y}$ indicates the probability of $\mathbf{K}_i$ representing $\mathbf{K}_j$, adhering to the condition $\mathbf{Y}(p,q)\in [0,1]$ and $\sum_{q=1}^m\mathbf{Y}(p,q)=1$, where $p,q=1,2,\dots,m$. The definition of $\mathbf{D}$ is provided in equation \eqref{eq_dij}.

\IncMargin{1em}
\begin{algorithm}[t]
    \renewcommand\arraystretch{0.8}
	{\fontsize{12pt}{\baselineskip}\selectfont 
\SetAlgoNoLine
\SetKwInOut{Input}{\textbf{Input}}
\SetKwInOut{Output}{\textbf{Output}}
\SetKwInOut{Initialize}{\textbf{Initialize}}

\Input{
    Multiple Kernels $\mathcal K=\{\mathbf{K}_1,\mathbf{K}_2,\dots,\mathbf{K}_m\}$\;\\
    Number of cluster $k$\;\\
    Trade-off parameter $\alpha,\beta$\;\\
    Stop threshold $\epsilon$\;\\}
 \Output{
    Coefficients of base kernels $\mathbf{w}^*=\mathbf{w}^{(t)}$\;\\
    Continuous cluster indicator matrix $\mathbf{H}^*=\mathbf{H}^{(t)}$\;\\
    Discrete cluster indicator matrix $\mathbf{C}$ by performing k-means clustering on $\mathbf{H}$\;\\}
\Initialize{
    Compute correlation matrix $\mathbf{M}$ with eq. (\ref{eq_Mij})\;\\
    Compute dissimilarity matrix $\mathbf{D}$ with eq. (\ref{eq_dij})\;\\ 
    Indicator matrix of kernel representation $\mathbf{Y}^{(0)}$\;\\ 
    Kernel coefficient vector $\mathbf{w}^{(0)}=mean(\mathbf{Y}^{(0)},2)$ \;\\ 
    Objective function values $f=0$ \;\\ 
    $t=0$\;\\}
\Repeat
        {\text{$f^{(t+1)}-f^{(t)}\leq \epsilon$}}
        {
        Update $\mathbf{K_{Y}}^{(t+1)}$ by solving eq.\ (\ref{updateK_Y})\;
        Update $\mathbf{H}^{(t+1)}$ by solving eq.\ (\ref{eq21})\;
        Update $\mathbf{Y}^{(t+1)}$ by solving eq.\ (\ref{eq22})\;
        Update $\mathbf{w}^{(t+1)}$ by solving eq.\ (\ref{update_w})\;
        $t=t+1$\;}
        }
    \caption{Kernel Correlation-Dissimilarity for MKKM Clustering}
    \label{ag1}
\end{algorithm}
\DecMargin{1em}

\subsection{Optimization Method}
\label{section3.3}
The optimization problem (\ref{eq_KCDCMKKM}) comprises two variables: $\mathbf{H}$ and $\mathbf{Y}$, which can be efficiently solved through the alternating minimization approach. Algorithm \ref{ag1} presents the procedural framework of the proposed method.

\subsubsection{Optimizing \texorpdfstring{$\mathbf{H}$}{}}
Given $\mathbf{Y}$, the optimization problem (\ref{eq_KCDCMKKM}) with respect to $\mathbf{H}$ can be expressed equivalently as
\begin{equation}
    \begin{split}
        &\min_{\mathbf{H}\in \mathbb{R}^{n\times k}}
            \mathrm{Tr}(\mathbf{K_Y(I}_n-\mathbf{HH}^\top)),\\ 
        &\mathrm{s.t.}\ \mathbf{H^\top H=I}_k.
        \label{eq21}
    \end{split}
\end{equation}
This optimization problem (\ref{eq21}) can also be expressed as
\begin{equation}
    \begin{split}
        &\min_{\mathbf{H}\in \mathbb{R}^{n\times k}} \mathrm{Tr}(-\mathbf{H}^\top \mathbf{K_Y}\mathbf{H})\iff \max_{\mathbf{H}\in \mathbb{R}^{n\times k}} \mathrm{Tr}(\mathbf{H}^\top \mathbf{K_Y}\mathbf{H})\\
        &s.t.\ \mathbf{H^\top H=I}_k.
        \label{equalto21}
    \end{split}
\end{equation}
Problem (\ref{equalto21}) corresponds to the classical KKM algorithm. Its optimal solution involves calculating the eigenvectors corresponding to the top $k$ eigenvalues of the combined kernel matrix $\mathbf{K_Y}$, which is based on the properties of the Rayleigh quotient function \cite{1999Rayleigh}.

\subsubsection{Optimizing \texorpdfstring{$\mathbf{Y}$}{}}
With $\mathbf{H}$ given, the optimization problem involving $\mathbf{Y}$ can be formulated as: 
\begin{equation}
    \begin{split}
        & \min_{\mathbf{Y}\in \mathbb{R}^{m\times m}}\frac{1}{m^2}(\mathbf{Y1}_m)^\top (\mathbf{B}+\alpha \mathbf{M})(\mathbf{Y1}_m)+\beta \mathrm{Tr}(\mathbf{D^\top Y})\\
        & \mathrm{s.t.}\ \mathbf{1}_m^\top \mathbf{Y=1}_m^\top,\ \ \mathbf{Y}(p,q)\ge 0,\ \forall p, q=1, 2, \dots, m,
        \label{eq22}
    \end{split}
\end{equation}
where $\mathbf{B} = \mathrm{diag}\left( \mathrm{Tr}(\mathbf{K}_1(\mathbf{I}_n-\mathbf{HH}^\top)), \mathrm{Tr}(\mathbf{K}_2(\mathbf{I}_n-\mathbf{HH}^\top)), \cdots, \mathrm{Tr}(\mathbf{K}_m(\mathbf{I}_n-\mathbf{HH}^\top))\right)$. The optimization problem (\ref{eq22}) falls into the category of quadratic programming problems, and its convexity is demonstrated below.

\begin{lemma} 
\label{lemma2}
Problem (\ref{eq22}) is a convex quadratic programming problem.
\end{lemma}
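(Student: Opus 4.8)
The plan is to verify the two defining features of a convex quadratic program: that the feasible region is a convex polyhedron and that the objective is a convex quadratic function of the decision matrix $\mathbf{Y}$. The feasible set is immediate, since the equality constraint $\mathbf{1}_m^\top\mathbf{Y}=\mathbf{1}_m^\top$ is affine and the constraints $\mathbf{Y}(p,q)\ge 0$ are linear inequalities, so the feasible set is the intersection of an affine subspace with the nonnegative orthant in $\mathbb{R}^{m\times m}$, hence convex. It then remains to analyze the objective. I would split it into the term $\beta\,\mathrm{Tr}(\mathbf{D}^\top\mathbf{Y})$, which is linear in $\mathbf{Y}$ and therefore harmless, and the quadratic term $\frac{1}{m^2}(\mathbf{Y}\mathbf{1}_m)^\top(\mathbf{B}+\alpha\mathbf{M})(\mathbf{Y}\mathbf{1}_m)$. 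Writing $\mathbf{w}=\mathbf{Y}\mathbf{1}_m/m$, which depends linearly on $\mathbf{Y}$, this term equals $\mathbf{w}^\top(\mathbf{B}+\alpha\mathbf{M})\mathbf{w}$; since a convex quadratic form composed with a linear map stays convex, it suffices to prove $\mathbf{B}+\alpha\mathbf{M}\succeq 0$.

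For $\mathbf{M}$, I would observe that it is a Gram matrix. Indeed $\mathbf{M}(p,q)=\mathrm{Tr}(\mathbf{K}_p^\top\mathbf{K}_q)=\langle\mathrm{vec}(\mathbf{K}_p),\mathrm{vec}(\mathbf{K}_q)\rangle$ is the Frobenius inner product, so $\mathbf{M}=\mathbf{V}^\top\mathbf{V}$ with $\mathbf{V}=[\mathrm{vec}(\mathbf{K}_1),\dots,\mathrm{vec}(\mathbf{K}_m)]$, hence $\mathbf{M}\succeq 0$. For the diagonal matrix $\mathbf{B}$, I only need its diagonal entries $\mathrm{Tr}(\mathbf{K}_p(\mathbf{I}_n-\mathbf{HH}^\top))$ to be nonnegative. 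Because $\mathbf{H}^\top\mathbf{H}=\mathbf{I}_k$, the matrix $\mathbf{I}_n-\mathbf{HH}^\top$ is symmetric and idempotent, i.e. an orthogonal projection, hence positive semidefinite; each base kernel $\mathbf{K}_p$ is positive semidefinite as well; and the trace of a product of two positive semidefinite matrices is nonnegative, since $\mathrm{Tr}(\mathbf{K}_p(\mathbf{I}_n-\mathbf{HH}^\top))=\mathrm{Tr}(\mathbf{K}_p^{1/2}(\mathbf{I}_n-\mathbf{HH}^\top)\mathbf{K}_p^{1/2})\ge 0$. Thus $\mathbf{B}\succeq 0$, and since $\alpha>0$ we obtain $\mathbf{B}+\alpha\mathbf{M}\succeq 0$.

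Combining these, the quadratic term is convex in $\mathbf{Y}$, the linear term preserves convexity, and the feasible set is polyhedral, which is exactly the statement that (\ref{eq22}) is a convex quadratic program. If the canonical matrix form is wanted, I would vectorize: with $\mathbf{y}=\mathrm{vec}(\mathbf{Y})$ one has $\mathbf{w}=\mathbf{P}\mathbf{y}$ for $\mathbf{P}=\frac{1}{m}(\mathbf{1}_m^\top\otimes\mathbf{I}_m)$, so the objective becomes $\mathbf{y}^\top\big(\mathbf{P}^\top(\mathbf{B}+\alpha\mathbf{M})\mathbf{P}\big)\mathbf{y}+\beta\,\mathrm{vec}(\mathbf{D})^\top\mathbf{y}$ with Hessian $2\mathbf{P}^\top(\mathbf{B}+\alpha\mathbf{M})\mathbf{P}\succeq 0$, and the constraints are linear in $\mathbf{y}$.

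I expect the only real content to be the two semidefiniteness claims; the Gram-matrix identification of $\mathbf{M}$ is the cleaner one, while the slightly more delicate point is recognizing that $\mathbf{I}_n-\mathbf{HH}^\top$ is an orthogonal projection, which uses $\mathbf{H}^\top\mathbf{H}=\mathbf{I}_k$ crucially, together with the fact that $\mathrm{Tr}(\mathbf{AB})\ge 0$ for $\mathbf{A},\mathbf{B}\succeq 0$. One caveat worth stating is that $\mathbf{P}^\top(\mathbf{B}+\alpha\mathbf{M})\mathbf{P}$ is generally rank-deficient, so the problem is convex but not strictly convex; this is still a bona fide convex QP, which is all the lemma asserts.
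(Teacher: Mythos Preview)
Your argument is correct and, in fact, cleaner than the paper's. You show $\mathbf{B}+\alpha\mathbf{M}\succeq 0$ by identifying $\mathbf{M}$ as a Gram matrix in the Frobenius inner product and $\mathbf{B}$ as a nonnegative diagonal matrix via the projection property of $\mathbf{I}_n-\mathbf{HH}^\top$; convexity then follows from composing a PSD quadratic form with the linear map $\mathbf{Y}\mapsto\mathbf{Y}\mathbf{1}_m/m$. The paper instead verifies the convexity inequality directly: setting $\mathbf{A}=\mathbf{B}+\alpha\mathbf{M}$, it computes $r(e\mathbf{w}+(1-e)\mathbf{v})-e\,r(\mathbf{w})-(1-e)\,r(\mathbf{v})=-e(1-e)(\mathbf{w}-\mathbf{v})^\top\mathbf{A}(\mathbf{w}-\mathbf{v})$ and argues this is $\le 0$ by appealing to entrywise positivity of $\mathbf{A}$, writing the quadratic form as a sum $\sum_{p,q}\mathbf{A}(p,q)(\mathbf{w}_p-\mathbf{v}_q)^2$. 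That last identity is not the standard expansion of a quadratic form, and entrywise nonnegativity alone does not imply positive semidefiniteness, so the paper's step is at best loosely stated. Your route sidesteps this entirely: the Gram-matrix observation for $\mathbf{M}$ and the projection argument for $\mathbf{B}$ give $\mathbf{A}\succeq 0$ unambiguously, and the vectorization remark makes the convex-QP structure explicit. The trade-off is that your proof invokes slightly more machinery (PSD of kernel matrices, idempotence of $\mathbf{I}_n-\mathbf{HH}^\top$, $\mathrm{Tr}(\mathbf{AB})\ge 0$ for PSD pairs), but these are all standard and yield a proof that actually closes.
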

\begin{proof}
Let $\mathbf{w}=\frac{\mathbf{Y1}_m}{m},\ \mathbf{A}=\mathbf{B}+\alpha \mathbf{M}$, and $r(\mathbf{w})=\mathbf{w}^\top \mathbf{A}\mathbf{w}$. Given that the matrix $\mathbf{A}$ is positive, for any $\mathbf{w},\mathbf{v}\in \mathbb{R}^m$ and $e\in (0,1)$, we have 
\begin{equation*}
\begin{split}
    &r(e\mathbf{w}+(1-e)\mathbf{v})-e\cdot r(\mathbf{w})-(1-e)\cdot r(\mathbf{v})\\
   &=-e(1-e)(\mathbf{w}-\mathbf{v})^\top \mathbf{A}(\mathbf{w}-\mathbf{v})\\
   &=-e(1-e)\sum_{p,q=1}^m\mathbf{A}(p,q)\cdot (\mathbf{w}_p-\mathbf{v}_q)^2\leq 0,
\end{split}
\end{equation*}
which indicates that the function $\frac{1}{m^2}(\mathbf{Y1}_m)^\top (\mathbf{B}+\alpha \mathbf{M}) (\mathbf{Y1}_m)$ is convex.
Furthermore, for any $e \in (0,1)$ and $\mathbf{Y}_1,\mathbf{Y}_2 \in \mathbb{R}^{m\times m}$, we have
\begin{equation*}
\mathrm{Tr}\left(\mathbf{D}^\top \left(e\mathbf{Y}_1+(1-e)\mathbf{Y}_2\right)\right) = e\cdot \mathrm{Tr}(\mathbf{D}^\top \mathbf{Y}_1) + (1-e)\cdot \mathrm{Tr}(\mathbf{D}^\top \mathbf{Y}_2).
\end{equation*}
Hence, the function $\mathrm{Tr}(\mathbf{D}^\top \mathbf{Y})$ is convex.
Let $g(\mathbf{Y}) = \mathbf{1}_m^\top \mathbf{Y-1}_m^\top$. For any $h \in (0,1)$ and $\mathbf{Y}_1,\mathbf{Y}_2 \in \mathbb{R}^{m\times m}$, we have
\begin{equation*}
g\left(h\mathbf{Y}_1+(1-h)\mathbf{Y}_2\right)=h\cdot g(\mathbf{Y}_1)+(1-h)\cdot g(\mathbf{Y}_2),
\end{equation*}
meaning that the function $g(\mathbf{Y})$ is also convex.
In conclusion, Problem (\ref{eq22}) is a convex quadratic programming problem.
\end{proof}

Various standard convex solvers can be employed for solving convex quadratic programming problems. These solvers encompass tools such as the \textit{quadprog} function within the Matlab Optimization Toolbox and the CVX convex optimization toolbox\footnote{http://cvxr.com}.

\subsection{Convergence and computational Complexity}

Our approach alternates between minimizing two convex optimization subproblems. The optimization process for these subproblems is detailed in Section \ref{section3.3}. The first subproblem (\ref{eq21}) constitutes a problem in quadratically constrained quadratic convex programming, and its optimal solution involves computing the largest eigenvalue of the kernel matrix $\mathbf{K_Y}$. The second subproblem (\ref{eq22}) presents a convex quadratic programming problem with a precise optimal solution. We utilize the alternating minimization method, addressing the two subproblems iteratively in an alternating manner to ensure the attainment of local optima. In summary, the objective function monotonically decreases with each iteration of these two subproblems until convergence.

Next, we outline the computational complexities from the following aspects:
\begin{enumerate}
\item
The computational complexity of kernel matrices ${\mathbf{K}_i}$ $(i=1,\cdots,m)$ is $\mathcal{O}(n^2m)$.
\item
The computational complexity of the kernel dissimilarity matrix $\mathbf{D}$ and the kernel similarity matrix $\mathbf{M}$ is $\mathcal{O}(n^2m^2)$.
\item
In each iteration, the computational complexity of the feature decomposition of $\mathbf{H}$ is $\mathcal{O}(n^3)$, and the computational complexity of the weight matrix $\mathbf{Y}$ is $\mathcal{O}(m^6)$.
\end{enumerate}

\begin{table}[t]
    \caption{Description of the used MKC benchmark datasets.}
    \begin{center}
    \renewcommand\arraystretch{0.8}
	{\fontsize{10pt}{\baselineskip}\selectfont 
        \begin{tabular}{c l c c c }
            \hline
           Type  & Name & Samples (n) & Kernels (m) & Classes(k) \\
            \hline
           \multirow{11}{*}{\makecell[c]{Small and \\ medium-scale \\ datasets}} 
            & Flower17 & 170: 170: 1360	& 7 & 17 \\
            & CCV & 200: 200: 2000 & 3 & 20 \\
            & ProteinFold & 694 & 12 & 27  \\
            & YALE & 165 & 5 & 15 \\
            & Caltech101 & 3060 & 10 & 102 \\
            & JAFFE & 213 & 12 & 10 \\
            & ORL & 400 & 12 & 40 \\
            & heart & 270 & 13 & 2 \\
            & Coil20 & 1440 & 12 & 20 \\
            & Handwritten\_numerals & 2000 & 6 & 10 \\
            & mfeat & 2000 & 12 & 10 \\
            \hline
            \multirow{3}{*}{\makecell[c]{Large-scale \\ datasets}}
            & MNIST10000 & 10000 & 12 & 10 \\
            & Flower102 & 8189 & 4 & 102 \\
            & USPS & 9298 & 12 & 10 \\
            \hline
        \end{tabular}
    }
    \label{tab1}
    \end{center}
\end{table}

\section{Experimental results}
\label{section4}
\subsection{Experimental Settings}
\subsubsection{Dataset Description}
\label{sec_kernelfunction}

The algorithm's performance underwent evaluation across 13 datasets, specified in Table \ref{tab1}. These datasets are categorized by size into three groups: small, medium, and large-scale. Specifically, small datasets comprise fewer than 1000 samples, medium-scale datasets encompass a range of 1000 to 5000 samples, while large-scale datasets include over 5000 samples.
The datasets cover various data types, including images sourced from repositories like YALE\footnote{http://vision.ucsd.edu/content/yale-face-database}, Flower17\footnote{www.robots.ox.ac.uk/~vgg/data/flowers/17/}, and Caltech101\footnote{www.vision.caltech.edu/Image Datasets/Caltech101/} and handwritten digits datasets like MNIST10000\footnote{http://yann.lecun.com/exdb/mnist/} and USPS\footnote{https://paperswithcode.com/dataset/usps}. For datasets such as ProteinFold, Coil20, Handwritten\_numerals, mfeat, Flower102 and heart, we utilized pre-computed kernel matrices from Xinwang Liu's page\footnote{https://xinwangliu.github.io/}, with further details available in the corresponding published papers. Conversely, the JAFFE and ORL datasets can be accessed on Liang Du's code website\footnote{https://github.com/csliangdu/RMKKM}. 
We computed the kernel matrices for these datasets using 12 foundational kernel functions employing the conventional method of multiple kernel learning:
\begin{itemize}
    \item 7 Gaussian kernels: $\kappa(\mathbf{x}_i,\mathbf{x}_j)=\exp(-\Vert \mathbf{x}_i-\mathbf{x}_j \Vert^2 / 2\sigma^2)$, where $\sigma =c \times d$, and $c \in \{0.01,0.05,0.1,$ $1,10,50,100\}$, and $d$ denotes maximum distance between pairwise samples.
    \item 4 Polynomial kernels: $\kappa(\mathbf{x}_i,\mathbf{x}_j)=(a+\mathbf{x}_i^\top \mathbf{x}_j)^b$, where $a\in \{0,1\}$ and $b\in \{2,4\}$.
    \item 1 Cosine kernel: $\kappa(\mathbf{x}_i,\mathbf{x}_j)=(\mathbf{x}_i^\top \mathbf{x}_j)/(\Vert \mathbf{x}_i \Vert _2\cdot \Vert \mathbf{x}_j \Vert _2)$.
\end{itemize}
Notably, the 12 foundational kernels must undergo normalization by employing the expression $\kappa(\mathbf{x}_i,\mathbf{x}_j)=\kappa(\mathbf{x}_i,\mathbf{x}_j)/ \sqrt{\kappa(\mathbf{x}_i,\mathbf{x}_i)\cdot \kappa(\mathbf{x}_j,\mathbf{x}_j)}$ and must be scaled within the interval $[0,1]$.

These 13 datasets encompass a wide range of samples, kernels, and classes, making them well-suited for evaluating clustering performance. We presupposed that the predetermined number of clusters, denoted as $k$, aligns with the actual number of classes.

\subsubsection{Comparison Algorithms}
To comprehensively explore the efficacy of the proposed approach, we conducted a performance evaluation against 7 additional comparative algorithms. The listed algorithms include:

\begin{itemize}
    \item Multiple Kernel k-Means (MKKM) \cite{2012MKFC}: This algorithm iteratively alternates between KKM and updating kernel weights until convergence, as detailed in the related literature.
    
    \item Average Multiple Kernel k-Means (A-MKKM): This algorithm acts as a baseline model. It computes the fused kernel $\mathbf{K}=\sum_{i=1}^m \frac{1}{m}\mathbf{K}_i$ by averaging the weights of all kernels. Then, the single-kernel k-means (KKM) algorithm is applied to this fused kernel.
    
    \item Single Best Kernel k-Means (SB-KKM): This algorithm applies KKM to each kernel and identifies the kernel with the most favorable clustering performance.
\item Localized Multiple Kernel k-Means (LMKKM) \cite{2014LMKKM}: This algorithm employs the localization method for learning kernel weights.
\item Multiple Kernel k-Means with Matrix-induced Regularization (MKKM-MR) \cite{2016MR}: This approach utilizes the correlation matrix among kernels to induce a regularized objective function, effectively mitigating redundant information within the kernels.
\item Multiple Kernel k-Means Clustering by Selecting Representative Kernels (MKKM-RK) \cite{2021RK}: This algorithm chooses a predetermined subset of kernels as representative kernels and integrates a process that induces subset selection based on dissimilarity between kernels into the MKKM clustering framework.
\item Simple Multiple Kernel k-Means (SMKKM) \cite{SimpleMKKM}: This algorithm reformulates the min-max formula of \cite{2018MM} as a minimization problem for an optimal value function. Additionally, it establishes the function's differentiability and devises a simplified gradient descent algorithm to derive the optimal solution.
\end{itemize}

\subsubsection{Parameter Selection}

Parameter tuning was carried out for the MKKM-MR \cite{2016MR} and MKKM-RK \cite{2021RK} algorithms. This involved varying the parameters within the ranges $\{2^{-15},2^{-14},\dots,2^{10}\}$ and $\{2^{-14},2^{-13},\dots,2^2\}$ respectively, as outlined in their respective references. In our proposed algorithm, we fine-tuned two parameters: $\alpha$ and $\beta$. To be precise, we systematically explored the parameter space by letting $\alpha$ vary in the range $\{0.1,0.2,\cdots,0.9\}$ and $\beta$ in the range $\{2^{-14},2^{-13},\cdots,2^{-5}\}$. The selection of optimal parameters for each method was based on identifying parameter values that resulted in the highest clustering performance.

\subsubsection{Clustering Metrics}
Two types of clustering evaluation metrics exist: internal evaluation and external evaluation. In our experiments, we utilized the external evaluation method because we had access to the true labels of the data. To evaluate the clustering results' quality and perform comparisons with other methods, we employed four widely used external evaluation measures: Accuracy (ACC), Normalized Mutual Information (NMI), Purity (PUR), and Adjusted Rand Index (ARI). Additional details regarding these measures are presented below.

Let $r_i$ and $s_i$ denote the cluster label and true label of the $i$-th data point, respectively, and let $n$ represent the total number of data points. The accuracy formula is as follows:
\begin{equation}
    \mathbf{ACC}(s_i,r_i)=\frac{\sum_{i=1}^n \delta(s_i,map(r_i))}{n},
\end{equation}
\begin{equation}
    \delta (x,y)=
\begin{cases}
1, & if\ x=y,\\
0, & otherwise.
\end{cases}
\end{equation}
Here, $map(r_i)$ corresponds to a permutation mapping function that associates each element in $r_i$ with the corresponding label in $s_i$.

Mutual information is employed to quantify the degree of agreement between two statistical distributions, particularly for two clustering results. Define $\mathbf{c}_p$ as the set of data points belonging to the $p$-th class of the true label, and $\mathbf{c}'_q$ as the set of data points obtained from the clustered label for the $q$-th class. The Mutual Information (MI) is calculated using the following formula:
\begin{equation}
\mathbf{MI}(s_i,r_i)=\sum_{p=1}^k\sum_{q=1}^{k'}\frac{n_{pq}}{n}\log\left(\frac{\frac{n_{pq}}{n}}{\frac{n_p}{n}\cdot\frac{n'_q}{n}}\right).
\end{equation}
Here, $k$ and $k'$ denote the number of classes in the true label and clustered label, respectively. 
$n_p$ and $n'_q$ represent the number of points in the sets $\mathbf{c}_p$ and $\mathbf{c}'_q$, while $n_{pq}$ indicates the number of points where the $p$-th class $\mathbf{c}_p$ of the true label and the $q$-th class $\mathbf{c}'_q$ of the clustering result overlap. 
The Normalized Mutual Information (NMI) is obtained by normalizing MI to the range of $[0,1]$:
\begin{equation}
    \mathbf{NMI}(s_i,r_i)=\frac{\mathbf{MI}(s_i,r_i)}{\sqrt{\mathbf{H}_{ent}(s_i)\cdot \mathbf{H}_{ent}(r_i)}},
\end{equation}
where $\mathbf{H}_{ent}(\cdot)$ represents the entropy function. 
Purity, on the other hand, is defined as the ratio of the number of correctly clustered samples to the total number of samples:
\begin{equation}
    \mathbf{PUR}(s_i,r_i)=\sum_{p=1}^k\frac{n_p}{n}(\max_q\frac{n_{pq}}{n_p}).
\end{equation}
NMI, ACC, and Purity have values ranging from $0$ to $1$, whereas Adjusted Rand Index (ARI) has values ranging from $-1$ to $1$, with higher values indicating better clustering results. 
ARI measures the coincidence between 2 data distributions and is calculated as:

\begin{equation}
    \mathbf{ARI}=\frac{\sum_p\sum_q\binom{n_{pq}}{2}-\dfrac{\sum_p \binom{n_p}{2}\sum_q\binom{n'_q}{2}}{\binom{n}{2}}}{\dfrac{1}{2}\left[\sum_p\binom{n_p}{2}+\sum_q\binom{n'_q}{2}\right]-\dfrac{\sum_p \binom{n_p}{2}\sum_q\binom{n'_q}{2}}{\binom{n}{2}}}.
\end{equation}
To mitigate the impact of randomness inherent in k-means, we performed 50 repetitions of all experiments and reported the average and standard deviation of the results. We selected the clustering results with the highest values for the four evaluation metrics as the basis for comparison in our experiment.

\subsection{Experimental Results}
\subsubsection{Clustering Performance of All Comparison Methods}
We first validate the performance of the algorithms on small and medium scale datasets. Tables \ref{tab_ACC}-\ref{tab_ARI} display the experimental results for four clustering evaluation metrics: ACC, NMI, PUR, and ARI, showcasing the performance comparison of the mentioned algorithms across the 10 datasets. Furthermore, we contrast the average precision and average rank of our method with those of the other comparison algorithms across all datasets, as presented in the final two rows of the four tables. The results illustrate that although our method may not surpass all others on every dataset, it attains the highest average accuracy and the lowest average rank. Based on these findings, we arrive at the following conclusions:
\begin{table}[t]
\begin{center}
\caption{
The results of all clustering methods on 10 benchmark datasets, measured in terms of Accuracy (ACC), are presented as "mean ± standard deviation". The best results are highlighted in \textbf{bold}. 
The average accuracy represents the mean ACC value of each method across all datasets. 
The average rank indicates the average ranking of each comparative method across all datasets, with lower values indicating better model performance.
}
    \resizebox{\textwidth}{!}{
    \renewcommand\arraystretch{1.2}
    \begin{tabular}{l c c c c c c c c}
    \hline
    Datasets & MKKM & AMKKM & SB-KKM & LMKKM & MKKM-MR & MKKM-RK & SimpleMKKM & Ours \\
    \hline
    heart & 0.5320$\pm$0.0053 & 0.8222$\pm$0.0000 & 0.7550$\pm$0.0396 & 0.8000$\pm$0.0000 & 0.8333$\pm$0.0000 & 0.7630$\pm$0.0000 & 0.6407$\pm$0.0000 & \textbf{0.8444}$\pm$0.0000 \\
    mfeat & 0.6285$\pm$0.0300 & 0.9411$\pm$0.0399 & 0.7148$\pm$0.0609 & 0.9319$\pm$0.0211 & 0.9311$\pm$0.0430 & 0.9269$\pm$0.0006 & 0.8940$\pm$0.0015 & \textbf{0.9583}$\pm$0.0211 \\
    Handwritten$\_$numerals & 0.6498$\pm$0.0223 & 0.7827$\pm$0.0875 & 0.7285$\pm$0.0714 & 0.9508$\pm$0.0017 & 0.9536$\pm$0.0006 & 0.9388$\pm$0.0167 & 0.9437$\pm$0.0011 & \textbf{0.9606}$\pm$0.0002 \\
    Flower17 & 0.4366$\pm$0.0186 & 0.5121$\pm$0.0133 & 0.3515$\pm$0.0196 & 0.4662$\pm$0.0145 & 0.5772$\pm$0.0114 & 0.5711$\pm$0.0072 & 0.5738$\pm$0.0123 & \textbf{0.5838}$\pm$0.0149 \\
    ProteinFold & 0.2731$\pm$0.0105 & 0.2939$\pm$0.0149 & 0.3030$\pm$0.0219 & 0.2868$\pm$0.0152 & 0.3507$\pm$0.0156 & 0.3496$\pm$0.0134 & 0.3200$\pm$0.0137 & \textbf{0.3710}$\pm$0.0156 \\
    YALE & 0.5247$\pm$0.0273 & 0.5470$\pm$0.0269 & 0.4633$\pm$0.0443 & 0.5459$\pm$0.0307 & 0.5622$\pm$0.0262 & 0.5561$\pm$0.0297 & 0.5604$\pm$0.0249 & \textbf{0.5696}$\pm$0.0237 \\
    Caltech101 & 0.3620$\pm$0.0089 & 0.3705$\pm$0.0098 & 0.2673$\pm$0.0092 & 0.3539$\pm$0.0106 & 0.3675$\pm$0.0104 & 0.3705$\pm$0.0098 & \textbf{0.3913}$\pm$0.0126 & 0.3777$\pm$0.0101 \\
    JAFFE & 0.8708$\pm$0.0317 & 0.9492$\pm$0.0088 & 0.7373$\pm$0.0807 & 0.9462$\pm$0.0126 & 0.9493$\pm$0.0234 & 0.9500$\pm$0.0134 & 0.9387$\pm$0.0260 & \textbf{0.9527}$\pm$0.0182 \\
    ORL & 0.4793$\pm$0.0205 & 0.6617$\pm$0.0331 & 0.5227$\pm$0.0303 & 0.6565$\pm$0.0362 & 0.6725$\pm$0.0250 & 0.6644$\pm$0.0299 & 0.6623$\pm$0.0299 & \textbf{0.6756}$\pm$0.0267 \\
    Coil20 & 0.6433$\pm$0.0223 & 0.6493$\pm$0.0249 & 0.5992$\pm$0.0433 & 0.6469$\pm$0.0259 & 0.6680$\pm$0.0230 & 0.6531$\pm$0.0234 & 0.6653$\pm$0.0208 & \textbf{0.6766}$\pm$0.0259 \\ \hline
    Average accuracy & 0.5400$\pm$0.0197 & 0.6530$\pm$0.0259 & 0.5443$\pm$0.0421 & 0.6585$\pm$0.0169 & 0.6865$\pm$0.0179 & 0.6761$\pm$0.0144 & 0.6590$\pm$0.0143 & \textbf{0.6970}$\pm$0.0156 \\
    Average rank & 7.4000 & 4.4000 & 7.1000 & 5.5000 & 2.7000 & 3.7000 & 4.0000 & 1.1000 \\
    \hline
    \end{tabular}}
    \label{tab_ACC}
\end{center}
\end{table}

\begin{table}[t]
\begin{center}
\caption{
The results of all clustering methods on 10 benchmark datasets, evaluated based on Normalized Mutual Information (NMI), are presented in the format of "mean ± standard deviation".
}
    \resizebox{\textwidth}{!}{
    \renewcommand\arraystretch{1.2}
    \begin{tabular}{l c c c c c c c c} \hline
    Datasets & MKKM & AMKKM & SB-KKM & LMKKM & MKKM-MR & MKKM-RK & SimpleMKKM & Ours \\ \hline
    heart & 0.0002$\pm$0.0001 & 0.3240$\pm$0.0000 & 0.1968$\pm$0.0400 & 0.2769$\pm$0.0000 & 0.3522$\pm$0.0000 & 0.2049$\pm$0.0000 & 0.0517$\pm$0.0000 & \textbf{0.3804}$\pm$0.0000 \\
    mfeat & 0.5989$\pm$0.0094 & 0.8925$\pm$0.0255 & 0.7089$\pm$0.0330 & 0.8751$\pm$0.0130 & 0.8808$\pm$0.0260 & 0.8578$\pm$0.0011 & 0.8289$\pm$0.0015 & \textbf{0.9128}$\pm$0.0137 \\
    Handwritten$\_$numerals & 0.6494$\pm$0.0144 & 0.8441$\pm$0.0465 & 0.7114$\pm$0.0363 & 0.8968$\pm$0.0025 & 0.9012$\pm$0.0011 & 0.8817$\pm$0.0119 & 0.8878$\pm$0.0015 & \textbf{0.9130}$\pm$0.0003 \\
    Flower17 & 0.4455$\pm$0.0141 & 0.4985$\pm$0.0088 & 0.3793$\pm$0.0141 & 0.4877$\pm$0.0081 & 0.5615$\pm$0.0073 & \textbf{0.5683}$\pm$0.0060 & 0.5546$\pm$0.0072 & 0.5669$\pm$0.0078 \\
    ProteinFold & 0.3806$\pm$0.0064 & 0.4038$\pm$0.0108 &	0.3474$\pm$0.0158 & 0.3942$\pm$0.0096 & 0.4417$\pm$0.0091 & 0.4281$\pm$0.0105 & 0.4127$\pm$0.0096 & \textbf{0.4627}$\pm$0.0090 \\
    YALE & 0.5462$\pm$0.0195 & 0.5710$\pm$0.0215 & 0.5107$\pm$0.0382 & 0.5737$\pm$0.0191 & 0.5858$\pm$0.0180 & 0.5651$\pm$0.0230 &	0.5902$\pm$0.0179 & \textbf{0.5950}$\pm$0.0173 \\
    Caltech101 & 0.6142$\pm$0.0044 & 0.6184$\pm$0.0057 & 0.5318$\pm$0.0055 & 0.6085$\pm$0.0063 & 0.6168$\pm$0.0057 & 0.6184$\pm$0.0057 & \textbf{0.6311}$\pm$0.0063 & 0.6238$\pm$0.0054 \\
    JAFFE & 0.8597$\pm$0.0237 & 0.9356$\pm$0.0131 & 0.8056$\pm$0.0478 & 0.9324$\pm$0.0136 & 0.9404$\pm$0.0189 & 0.9387$\pm$0.0146 & 0.9260$\pm$0.0197 & \textbf{0.9445}$\pm$0.0138 \\
    ORL & 0.6795$\pm$0.0112 & 0.8115$\pm$0.0167 & 0.7246$\pm$0.0160 & 0.8078$\pm$0.0173 & 0.8135$\pm$0.0157 & 0.8113$\pm$0.0151 & 0.8088$\pm$0.0176 & \textbf{0.8150}$\pm$0.0138 \\
    Coil20 & 0.7590$\pm$0.0148 & 0.7624$\pm$0.0142 & 0.7469$\pm$0.0185 & 0.7592$\pm$0.0131 & 0.7695$\pm$0.0120 & 0.7660$\pm$0.0150 & 0.7727$\pm$0.0137 & \textbf{0.7758}$\pm$0.0114 \\ \hline
    Average accuracy & 0.5533$\pm$0.0118 & 0.6662$\pm$0.0163 & 0.5663$\pm$0.0265 & 0.6612$\pm$0.0103 & 0.6863$\pm$0.0114 & 0.6779$\pm$0.0103 & 0.6465$\pm$0.0095 & \textbf{0.6990}$\pm$0.0079 \\
    Average rank & 7.3000 & 4.1000 & 7.5000 & 5.3000 & 2.8000 & 3.8000 & 4.0000 & 1.2000 \\
    \hline
    \end{tabular}}
    \label{tab_NMI}
\end{center}
\end{table}
\begin{table}[t]
\begin{center}
\caption{The results of all clustering methods on 10 benchmark datasets are presented in terms of "mean ± standard deviation" for the Purity (PUR) evaluation metric. }
    \resizebox{\textwidth}{!}{
    \renewcommand\arraystretch{1.2}
    \begin{tabular}{l c c c c c c c c} \hline
    Datasets & MKKM & AMKKM & SB-KKM & LMKKM & MKKM-MR & MKKM-RK & SimpleMKKM & Ours \\ \hline
    heart & 0.5556$\pm$0.0000 & 0.8222$\pm$0.0000 & 0.7550$\pm$0.0396 & 0.8000$\pm$0.0000 & 0.8333$\pm$0.0000 & 0.7630$\pm$0.0000 & 0.6407$\pm$0.0000 & \textbf{0.8444}$\pm$0.0000 \\
    mfeat & 0.6444$\pm$0.0167 & 0.9435$\pm$0.0318 & 0.7374$\pm$0.0506 & 0.9320$\pm$0.0180 & 0.9338$\pm$0.0348 & 0.9269$\pm$0.0006 & 0.8940$\pm$0.0015 & \textbf{0.9589}$\pm$0.0169 \\
    Handwritten$\_$numerals & 0.6603$\pm$0.0188 & 0.8300$\pm$0.0668 & 0.7632$\pm$0.0577 & 0.9508$\pm$0.0017 & 0.9536$\pm$0.0006 & 0.9388$\pm$0.0167 & 0.9437$\pm$0.0011 & \textbf{0.9610}$\pm$0.0002 \\
    Flower17 & 0.4506$\pm$0.0164 & 0.5214$\pm$0.0116 & 0.3721$\pm$0.0198 & 0.4860$\pm$0.0121 & 0.5929$\pm$0.0115 & 0.5840$\pm$0.0073 & 0.5853$\pm$0.0115 & \textbf{0.5959}$\pm$0.0066 \\
    ProteinFold & 0.3372$\pm$0.0084 & 0.3747$\pm$0.0153 & 0.3520$\pm$0.0192 & 0.3624$\pm$0.0110 & 0.4220$\pm$0.0114 & 0.4089$\pm$0.0135 & 0.3882$\pm$0.0131 & \textbf{0.4434}$\pm$0.0128 \\
    YALE & 0.5341$\pm$0.0257 & 0.5549$\pm$0.0255 & 0.4845$\pm$0.0412 & 0.5530$\pm$0.0274 & 0.5669$\pm$0.0247 & 0.5645$\pm$0.0276 & 0.5648$\pm$0.0239 & \textbf{0.5739}$\pm$0.0220 \\
    Caltech101 & 0.3857$\pm$0.0078 & 0.3928$\pm$0.0080 & 0.2878$\pm$0.0092 & 0.3784$\pm$0.0107 & 0.3911$\pm$0.0091 & 0.3928$\pm$0.0080 & \textbf{0.4142}$\pm$0.0125 & 0.4011$\pm$0.0097 \\
    JAFFE & 0.8722$\pm$0.0284 & 0.9492$\pm$0.0088 & 0.7734$\pm$0.0638 & 0.9462$\pm$0.0126 & 0.9498$\pm$0.0212 & 0.9500$\pm$0.0134 & 0.9392$\pm$0.0240 & \textbf{0.9531}$\pm$0.0155 \\
    ORL & 0.5214$\pm$0.0181 & 0.6962$\pm$0.0284 & 0.5667$\pm$0.0267 & 0.6922$\pm$0.0278 & 0.7038$\pm$0.0218 & 0.6984$\pm$0.0256 & 0.6973$\pm$0.0257 & \textbf{0.7082}$\pm$0.0225 \\
    Coil20 & 0.6664$\pm$0.0212 & 0.6790$\pm$0.0209 & 0.6533$\pm$0.0335 & 0.6780$\pm$0.0216 & 0.6827$\pm$0.0183 & 0.6835$\pm$0.0197 & 0.6846$\pm$0.0169 & \textbf{0.6940}$\pm$0.0202 \\ \hline
    Average accuracy & 0.5628$\pm$0.0162 & 0.6764$\pm$0.0217 & 0.5745$\pm$0.0361 & 0.6779$\pm$0.0143 & 0.7030$\pm$0.0153 & 0.6947$\pm$0.0132 & 0.6752$\pm$0.0130 & \textbf{0.7134}$\pm$0.0122 \\
    Average rank & 7.4000 & 4.3000 & 7.4000 & 5.5000 & 2.8000 & 3.6000 & 3.9000 & 1.1000 \\
    \hline
    \end{tabular}}
    \label{tab_PUR}
\end{center}
\end{table}

\begin{table}[t]
\begin{center}
\caption{
The results of all clustering methods on 10 benchmark datasets are reported in the form of "mean ± standard deviation" for the Adjusted Rand Index (ARI) evaluation metric.
}
    \resizebox{\textwidth}{!}{
    \renewcommand\arraystretch{1.2}
    \begin{tabular}{l c c c c c c c c} \hline
    Datasets & MKKM & AMKKM & SB-KKM & LMKKM & MKKM-MR & MKKM-RK & SimpleMKKM & Ours \\ \hline
    heart & -0.0040$\pm$0.0000 & 0.4131$\pm$0.0000 & 0.2631$\pm$0.0532 & 0.3576$\pm$0.0000 & 0.4424$\pm$0.0000 & 0.2738$\pm$0.0000 & 0.0755$\pm$0.0000 & \textbf{0.4726}$\pm$0.0000 \\
    mfeat & 0.4664$\pm$0.0141 & 0.8864$\pm$0.0420 & 0.6072$\pm$0.0543 & 0.8606$\pm$0.0230 & 0.8688$\pm$0.0442 & 0.8457$\pm$0.0011 & 0.7924$\pm$0.0025 & \textbf{0.9134}$\pm$0.0231 \\
    Handwritten$\_$numerals & 0.5180$\pm$0.0206 & 0.7636$\pm$0.0838 & 0.6225$\pm$0.0580 & 0.8952$\pm$0.0034 & 0.9009$\pm$0.0013 & 0.8740$\pm$0.0198 & 0.8810$\pm$0.0021 & \textbf{0.9150}$\pm$0.0004 \\
    Flower17 & 0.2661$\pm$0.0151 & 0.3258$\pm$0.0118 & 0.2023$\pm$0.0140 & 0.2972$\pm$0.0091 & 0.3950$\pm$0.0086 & 0.4005$\pm$0.0107 & 0.3927$\pm$0.0110 & \textbf{0.4021}$\pm$0.0102 \\
    ProteinFold & 0.1216$\pm$0.0071 & 0.1467$\pm$0.0159 & 0.1224$\pm$0.0168 & 0.1435$\pm$0.0173 & 0.1765$\pm$0.0136 & 0.1603$\pm$0.0123 & 0.1703$\pm$0.0144 & \textbf{0.1871}$\pm$0.0128 \\
    YALE & 0.3090$\pm$0.0272 & 0.3379$\pm$0.0303 & 0.2543$\pm$0.0465 & 0.3413$\pm$0.0286 & 0.3540$\pm$0.0260 & 0.3339$\pm$0.0265 & 0.3618$\pm$0.0252 & \textbf{0.3685}$\pm$0.0243 \\
    Caltech101 & 0.2319$\pm$0.0074 & 0.2388$\pm$0.0095 & 0.1304$\pm$0.0068 & 0.2234$\pm$0.0090 & 0.2362$\pm$0.0093 & 0.2388$\pm$0.0095 & \textbf{0.2565}$\pm$0.0100 & 0.2454$\pm$0.0088 \\
    JAFFE & 0.7727$\pm$0.0371 & 0.8946$\pm$0.0186 & 0.6655$\pm$0.0767 & 0.8887$\pm$0.0221 & 0.8992$\pm$0.0323 & 0.8970$\pm$0.0241 & 0.8803$\pm$0.0346 & \textbf{0.9048}$\pm$0.0232 \\
    ORL & 0.3213$\pm$0.0187 & 0.5443$\pm$0.0370 & 0.3774$\pm$0.0307 & 0.5367$\pm$0.0387 & 0.5497$\pm$0.0324 & 0.5458$\pm$0.0322 & 0.5404$\pm$0.0373 & \textbf{0.5526}$\pm$0.0313 \\
    Coil20 & 0.5720$\pm$0.0226 & 0.5830$\pm$0.0272 & 0.5372$\pm$0.0386 & 0.5773$\pm$0.0249 & 0.5977$\pm$0.0228 & 0.5881$\pm$0.0274 & 0.5993$\pm$0.0235 & \textbf{0.6023}$\pm$0.0216 \\ \hline
    Average accuracy & 0.3575$\pm$0.0140 & 0.5134$\pm$0.0276 & 0.3782$\pm$0.0340 & 0.5122$\pm$0.0176 & 0.5420$\pm$0.0191 & 0.5338$\pm$0.0164 & 0.4950$\pm$0.0161 & \textbf{0.5564}$\pm$0.0156 \\
    Average rank & 7.4000 & 4.3000 & 7.4000 & 5.3000 & 2.8000 & 3.7000 & 4.0000 & 1.1000 \\
    \hline
    \end{tabular}}
    \label{tab_ARI}
\end{center}
\end{table}

\begin{enumerate}
\item Our proposed algorithm demonstrates a significant performance advantage over existing MKKM algorithms by effectively considering redundant information present among kernels. For instance, on the heart dataset, our algorithm exhibits enhancements of 31.24$\%$, 38.02$\%$, 28.88$\%$, and 46.86$\%$ for ACC, NMI, PUR, and ARI, respectively, in comparison to the conventional MKKM algorithm. Similarly, on the mfeat dataset, our algorithm surpasses MKKM by 32.98$\%$, 31.39$\%$, 31.45$\%$, and 44.70$\%$, correspondingly, across these clustering evaluation metrics.

\item The AMKKM method stands as a firmly established benchmark model with acknowledged competence in addressing clustering problems through the assignment of average weights to kernel matrices. In comparison to the majority of other clustering methods, it consistently demonstrates comparable or even superior clustering performance. Nonetheless, our proposed model outperforms this benchmark, exemplifying superior performance in comparison.

\item Comparative experiments provide evidence of our algorithm's superior performance over SB-KKM, a difference that can be partly attributed to the incorporation of the composite kernel matrix. Through the amalgamation of multiple kernel matrices, we adeptly capture the inherent features of the data while diminishing redundant information. In contrast, the SB-KKM algorithm relies solely on a single kernel function to encapsulate data characteristics, a limitation that might affect its performance on specific datasets.

\item Our proposed method achieves the highest clustering results in 9 out of the 10 datasets and secures the second-best result in one dataset. In comparison to the MKKM-MR algorithm, which utilizes kernel correlation, and the MKKM-RK algorithm, which employs kernel dissimilarity, our method consistently demonstrates superior performance across all 10 datasets. This outcome validates the effectiveness of combining kernel dissimilarity and correlation in our approach. For instance, on the ProteinFold dataset, our algorithm outperforms MKKM-MR and MKKM-RK by $2.03\%$ and $2.14\%$ in terms of ACC, $2.30\%$ and $3.46\%$ in terms of NMI, $2.14\%$ and $3.45\%$ in terms of PUR, and $1.06\%$ and $2.68\%$ in terms of ARI, respectively. These substantial enhancements in performance provide compelling evidence of the advantages offered by our approach.

\item Our proposed method consistently exhibits a higher average accuracy compared to all other comparison methods, surpassing the second-best method by over $1\%$. Additionally, our method attains the highest average rank across all four clustering evaluation metrics.
\end{enumerate}

In our model, we use the Manhattan distance and the Frobenius inner product to measure the similarity between kernel matrices. A higher similarity between two kernels suggests more shared information. The Manhattan distance computes the absolute similarity of two kernel matrices, i.e., the sum of the absolute value differences $\sum_{i=1}^n\sum_{j=1}^n\left|\mathbf{A}(i,j)-\mathbf{B}(i,j)\right|$. This method effectively captures specific variations in each dimension and is sensitive to outliers or anomalous data points. However, it emphasizes the differences in specific numerical values without considering their relative magnitudes. On the other hand, the Frobenius inner product calculates the sum of the products of corresponding elements in two matrices $\sum_{i=1}^n\sum_{j=1}^n\mathbf{A}(i,j)\cdot \mathbf{B}(i,j)$. This method focuses on the overall dot product, emphasizing the relative relationships, proportions, and trends between elements, without considering specific numerical values. Larger values significantly impact the sum of the Frobenius inner product. Therefore, the expressions of similarity through the Manhattan distance and the Frobenius inner product provide distinct information and are not interchangeable.

In an ideal scenario, the high similarity between two kernels should correspond to a high kernel correlation, as represented by the Frobenius inner product, and a low kernel dissimilarity, as represented by the Manhattan distance. However, this theoretical consistency does not always hold true in practical applications. For instance, in the ProteinFold dataset (Table \ref{cor_dis_compare}), the observed kernel similarity values lack consistency. There are cases where despite having the highest correlation, the dissimilarity is also at its peak, as observed between $\mathbf{K}_1$ and $\mathbf{K}_{10}$.

Diverse datasets in practical scenarios exhibit distinct structures and characteristics, including variations in distribution, shape, and inter-sample relationships. As a result, kernel similarity results differ across datasets. Moreover, kernels with different types or parameters have unique properties, leading to varied methods of capturing relationships between data points. The inherent mapping characteristics of different kernel functions can disrupt the consistency of kernel similarity evaluation methods. For example, while linear kernel functions primarily capture linear relationships, Gaussian kernel functions demonstrate superior nonlinear modeling capabilities. These factors collectively influence the data representation, complicating the establishment of consistent trends when measuring the approximation level of kernel matrices through different methods. Therefore, relying solely on one approach is insufficient for effectively characterizing the approximation level between kernels.

However, simply reducing the number of kernel functions is not an ideal solution to address the inconsistency in kernel functions, as it may lead to a loss of the data’s non-linear features. Pursuing consistency by decreasing the number of kernel functions can degrade the algorithm’s performance, potentially devolving it into the kernel k-means algorithm. Therefore, our approach considers both measurement methods simultaneously, employing a consistency framework to constrain them. This provides a comprehensive depiction of the data’s non-linear features and the internal connections between different kernels. Experimental results confirm the benefits of considering both the relative sizes and absolute differences of elements when evaluating the overall similarity between kernels. This ensures the robustness and significant effectiveness of our method across various datasets.

\begin{table}[t]
\caption{Friedman test with the critical value $F(7, 63) = 2.159$. }
\begin{center}
    \renewcommand\arraystretch{1.2}
    \begin{tabular}{c c c c c }
    \hline
    {\footnotesize Friedman Test}  & {\footnotesize ACC} & {\footnotesize NMI} & {\footnotesize PUR} & {\footnotesize ARI} \\ \hline
    {\footnotesize $\tau_F$} & {\footnotesize 28.2781} & {\footnotesize 29.4146} & {\footnotesize 35.3662} & {\footnotesize 32.2664}\\ \hline
    \end{tabular}
    \label{table_friedman}
\end{center}
\end{table}

\subsubsection{Friedman Test and Post-hoc test}
The Friedman test is utilized to ascertain whether all algorithms are equivalent, assuming they share the same average rank \cite{2006Ttest}. We assess the performance of $8$ algorithms on 10 datasets, with a significance level set at $5\%$. The statistics are computed using the following equations:
\begin{equation}
    \tau_F=\frac{(n-1)\tau_{\chi^2}}{n(k-1)-\tau_{\chi^2}},
\end{equation}
where 
\begin{equation}
    \tau_{\chi^2}=\frac{12n}{k(k+1)}(\sum_{i=1}^nr_i^2-\frac{k(k+1)^2}{4}),
\end{equation}
and $r_i$ denotes the average rank. The variable $\tau_F$ follows an F distribution with degrees of freedom $k-1$ and $(k-1)(n-1)$. Observing Table \ref{table_friedman}, it is evident that the values resulting from the Friedman test for the four clustering evaluation metrics markedly exceed the critical value of $F(7.63)=2.159$. This discrepancy suggests the rejection of the null hypothesis of equivalent algorithms.

With the distinction between algorithms established, another test becomes crucial for assessing the distinctions among them. The Nemenyi test is employed to compute the critical difference (CD) of the average rank, defined as follows:
\begin{equation}
    CD=q_{\gamma}\sqrt{\frac{k(k+1)}{6n}},
\end{equation}
where $\gamma=0.05$ and the chosen critical value $q_{0.05}=3.031$ leads to $CD=3.3203$. When the average rank difference between two algorithms exceeds CD, they are considered significantly distinct. 

Utilizing our experimental results depicted in Figure~\ref{average_ranks}, it is evident that LMKKM, SB-KKM, and MKKM exhibit notable differences from our proposed algorithm. Both our approach and the MKKM-MR, MKKM-RK methods strive to mitigate redundant information among multiple kernels, resulting in more overlapping outcomes. Similarly, SimpleMKKM and A-MKKM include iterative kernel weight adjustments, resulting in a modest overlap with our algorithm. In summary, our method consistently surpasses all other algorithms used for comparison across all 4 metrics.

\begin{figure}[t]
\begin{center}
    \renewcommand\arraystretch{0.8}
    {\fontsize{10pt}{\baselineskip}\selectfont 
    \begin{tabular}{cc}
    \includegraphics[width=.45\linewidth]{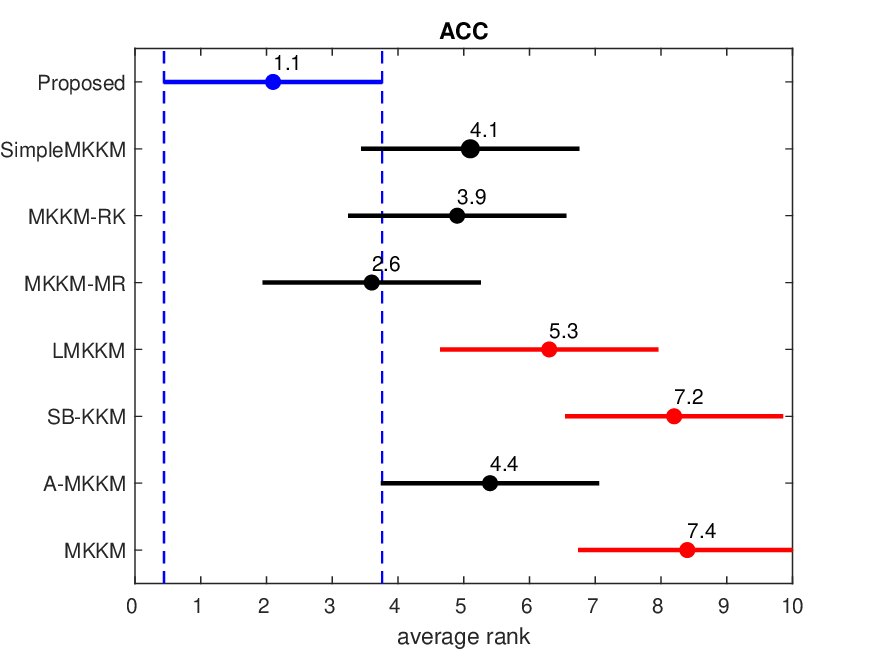} & \includegraphics[width=.45\linewidth]{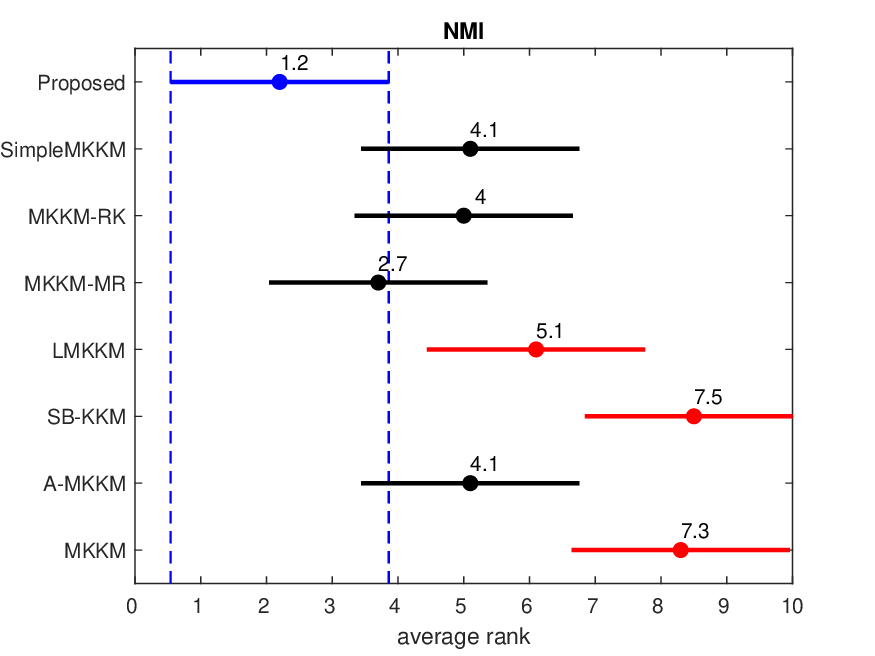} \\
    (a) & (b) \\
    \includegraphics[width=.45\linewidth]{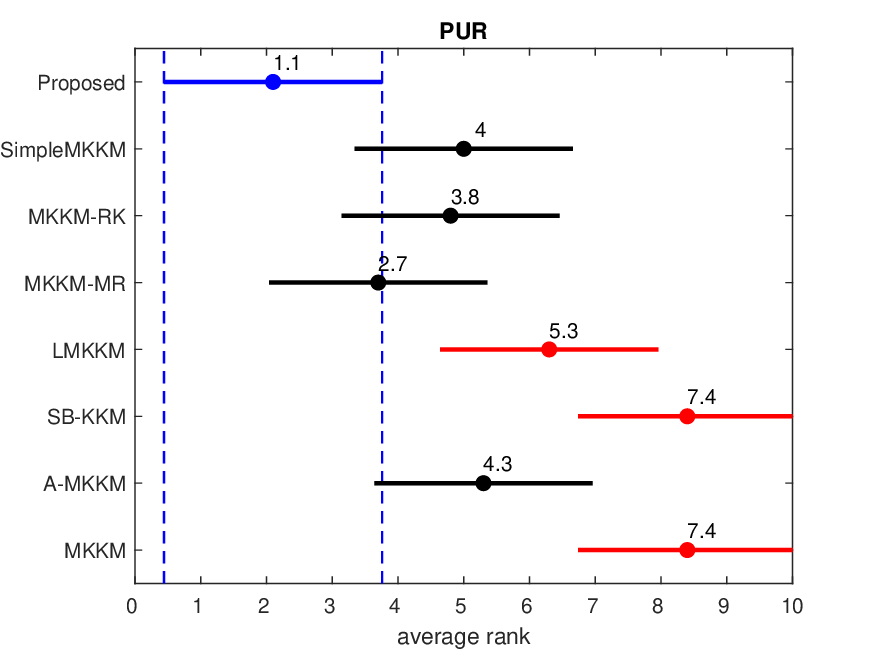} & \includegraphics[width=.45\linewidth]{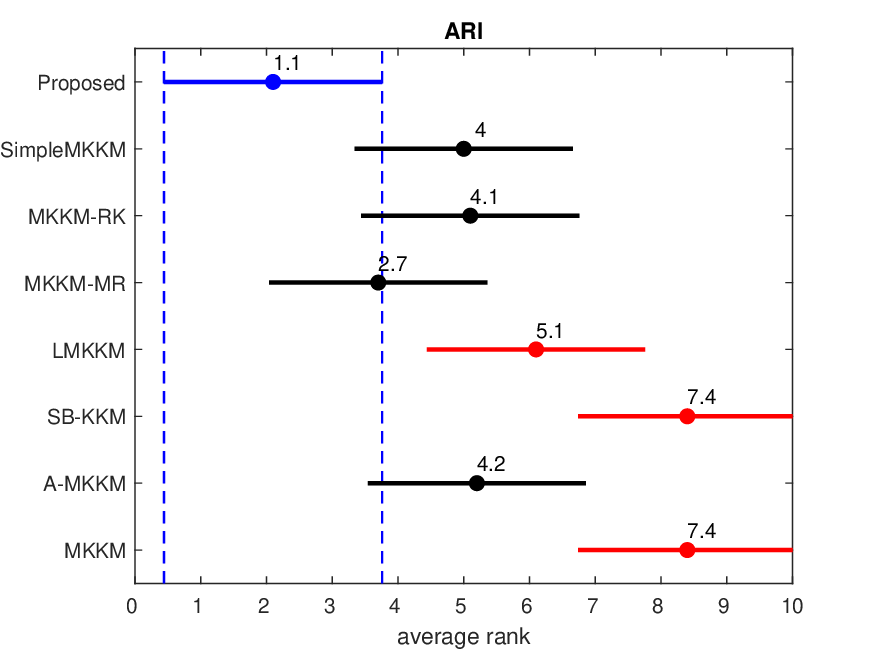} \\
    (c) & (d) 
    \end{tabular}
    }
    \caption{
The results of the Friedman test conducted on all comparison algorithms for the 4 clustering evaluation metrics: (a) ACC, (b) NMI, (c) PUR, and (d) ARI. The values depicted in the graph represent the average rank of the algorithms. The black and {\color{red}red} lines correspond to the algorithms that overlap and non-overlap with our algorithm, which is represented by the {\color{blue}blue} lines, respectively. 
 }
    \label{average_ranks}
\end{center}
\end{figure}

\subsubsection{Experiments on Large-scale Datasets}
In this section, we outlined the outcomes of comparative experiments conducted on three large-scale datasets detailed in Table \ref{tab_bigdata}. For both the MNIST10000 and USPS datasets, we built 12 base kernel matrices, excluding LMKKM due to its extended computational time on larger datasets. As data volume grows, the performance of these algorithms fluctuates across different datasets. For instance, while MKKM improves the ACC metrics on the USPS dataset, its performance significantly declines on the Flower102 dataset. Similar trends are evident for the AMKKM and SB-KKM algorithms. Conversely, simpleMKKM fares well on the Flower102 dataset but exhibits poorer performance on USPS and MNIST10000 datasets. Beyond the inherent differences among these datasets, the escalating noise accompanying larger datasets further impacts algorithmic robustness. Our experimental findings reveal that our proposed model’s clustering outcomes remain highly stable across these large datasets, demonstrating either optimal or near-optimal evaluation indices. This substantiates our study's effectiveness in mitigating data discrepancies by integrating kernel correlation and similarity, ensuring robustness across diverse datasets.

\begin{table}[t]
\begin{center}
\caption{
The results of 7 clustering methods on three large-scale datasets are reported in the form of "mean ± standard deviation" for the 4 clustering metrics.
}
\resizebox{\textwidth}{!}{
    \renewcommand\arraystretch{1.2}
    \begin{tabular}{lcccccccc}  \hline
    Datasets & Metric & MKKM & AMKKM & SB-KKM & MKKM-MR & MKKM-RK & SimpleMKKM & Ours\\  \hline
    \multirow{4}{*}{MNIST10000} & ACC & 0.5447$\pm$0.0010 & 0.5492$\pm$0.0003 & 0.5583$\pm$0.0387 & 0.5876$\pm$0.0138 & 0.5729$\pm$0.0010 & 0.3820$\pm$0.0103 & \textbf{0.6028}$\pm$0.0018\\
    &NMI&0.4882$\pm$0.0008&0.4936$\pm$0.0004&0.5188$\pm$0.0167&\textbf{0.5290}$\pm$0.0096&0.5028$\pm$0.0005&0.3404$\pm$0.0039&0.5203$\pm$0.0006\\
    &PUR&0.5974$\pm$0.0011 & 0.6022$\pm$0.0003 & 0.6046$\pm$0.0291 & 0.6363$\pm$0.0139 & 0.6209$\pm$0.0011 & 0.4181$\pm$0.0076 & \textbf{0.6491}$\pm$0.0013\\
    &ARI&0.3738$\pm$0.0010 & 0.3799$\pm$0.0006 & 0.4085$\pm$0.0267 & 0.4226$\pm$0.0173 & 0.3873$\pm$0.0011 & 0.2281$\pm$0.0065 & \textbf{0.4275}$\pm$0.0014\\ \hline
    \multirow{4}{*}{USPS} & ACC & \textbf{0.7013}$\pm$0.0001 & 0.6773$\pm$0.0007 & 0.6811$\pm$0.0437 & 0.7001$\pm$0.0003 & 0.3449$\pm$0.0001 & 0.2565$\pm$0.0001 & 0.6976$\pm$0.0093\\
    &NMI&0.6497$\pm$0.0002 & 0.6293$\pm$0.0008 & 0.6352$\pm$0.0173 & 0.6503$\pm$0.0004 & 0.3564$\pm$0.0003 & 0.2019$\pm$0.0001 & \textbf{0.6510}$\pm$0.0052\\
    &PUR&0.7687$\pm$0.0001 & 0.7516$\pm$0.0007 & 0.7460$\pm$0.0277 & 0.7693$\pm$0.0002 & 0.4531$\pm$0.0001 & 0.3470$\pm$0.0001 & \textbf{0.7703}$\pm$0.0091\\
    &ARI&\textbf{0.5880}$\pm$0.0002 & 0.5577$\pm$0.0008 & 0.5682$\pm$0.0331 & 0.5878$\pm$0.0003 & 0.2321$\pm$0.0003 & 0.0717$\pm$0.0002 & 0.5879$\pm$0.0080\\ \hline
    \multirow{4}{*}{Flower102} & ACC & 0.2252$\pm$0.0052 & 0.0425$\pm$0.0008 & 0.1196$\pm$0.0103 & 0.4022$\pm$0.0092 & 0.2249$\pm$0.0051 & 0.4209$\pm$0.0141 & \textbf{0.4232}$\pm$0.0071\\
    &NMI&0.4270$\pm$0.0029 & 0.1413$\pm$0.0015 & 0.2449$\pm$0.0151 & 0.5671$\pm$0.0049 & 0.4252$\pm$0.0024 & 0.5817$\pm$0.0059 & \textbf{0.5817}$\pm$0.0035\\
    &PUR&0.2798$\pm$0.0055 & 0.0556$\pm$0.0011 & 0.1320$\pm$0.0104 & 0.4634$\pm$0.0085 & 0.2785$\pm$0.0047 & 0.4812$\pm$0.0094 & \textbf{0.4833}$\pm$0.0061\\
    &ARI&0.1202$\pm$0.0042 & -0.0003$\pm$0.0002 & 0.0444$\pm$0.0057 & 0.2549$\pm$0.0062 & 0.1187$\pm$0.0033 & 0.2804$\pm$0.0106 & \textbf{0.2815}$\pm$0.0064\\ \hline
    \end{tabular}}
    \label{tab_bigdata}
    \end{center}
\end{table}

Our model's versatility in handling vast and diverse datasets has been demonstrated through rigorous experimental validation, particularly in applications like image recognition, pattern analysis, and data mining. Its consistent high performance showcases remarkable robustness, even when confronted with noise or intricate structures within the input data. This resilience is crucial in real-world scenarios, where data quality often fluctuates, especially in fields like medical imaging.

\subsubsection{Robustness of Clustering with Respect to Sample Size}

We examined how clustering outcomes are affected by varying sample sizes, analyzing the trends in performance for each algorithm as the sample size increased. The results, depicting the four clustering metrics, are displayed in Figure ~\ref{CCV_flower_trends} for the flower17 and CCV datasets. In both cases, the sample size was systematically augmented by 170 samples for flower17 and 200 samples for CCV, while maintaining a consistent number of kernels and classes.

\begin{figure*}[tp]
    \centering
    \renewcommand\arraystretch{0.8}
    {\fontsize{10pt}{\baselineskip}\selectfont 
    \begin{tabular}{cc}
    \includegraphics[width=.34\linewidth, trim={0 0 30 0},clip]{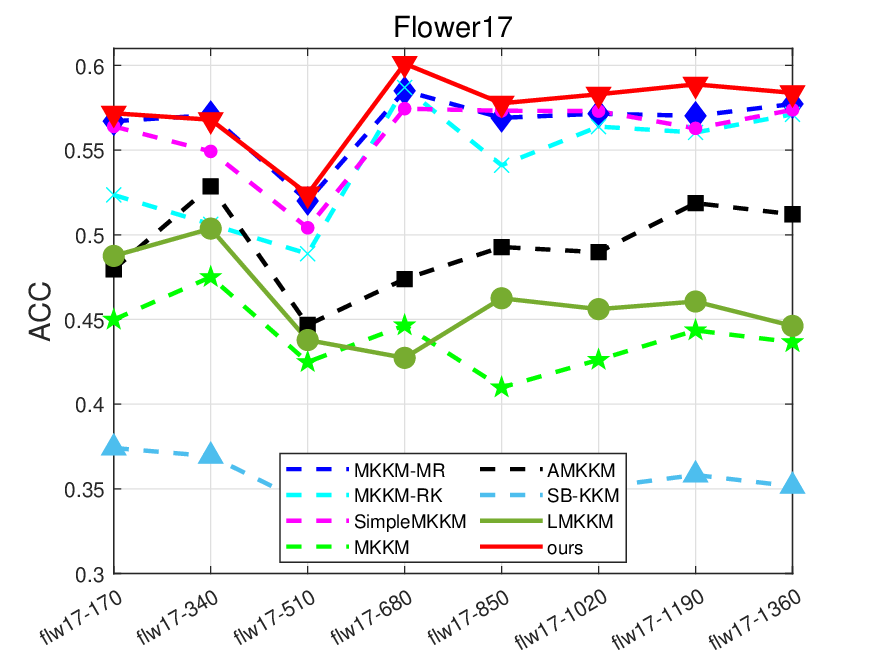} & 
    \includegraphics[width=.34\linewidth, trim={0 0 30 0},clip]{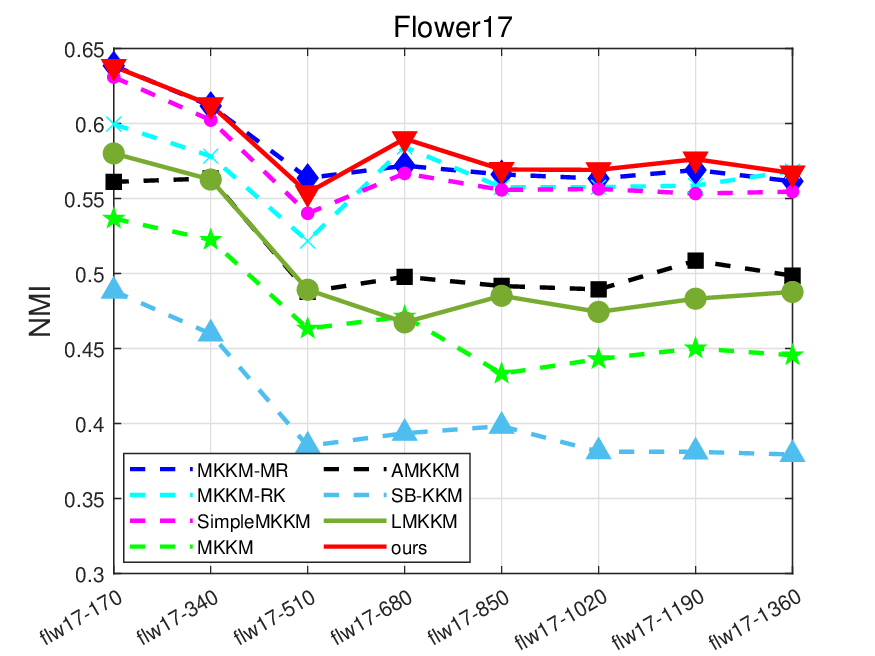} \\     
    (a) ACC on the flower17 & (b) NMI on the flower17 \\
    \includegraphics[width=.34\linewidth, trim={0 0 30 0},clip]{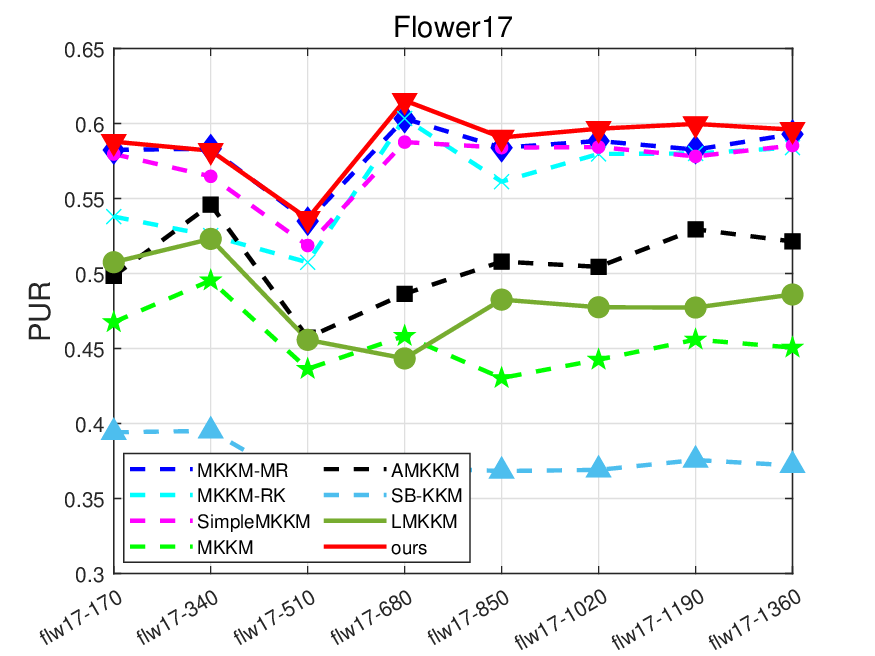} & 
    \includegraphics[width=.34\linewidth, trim={0 0 30 0},clip]{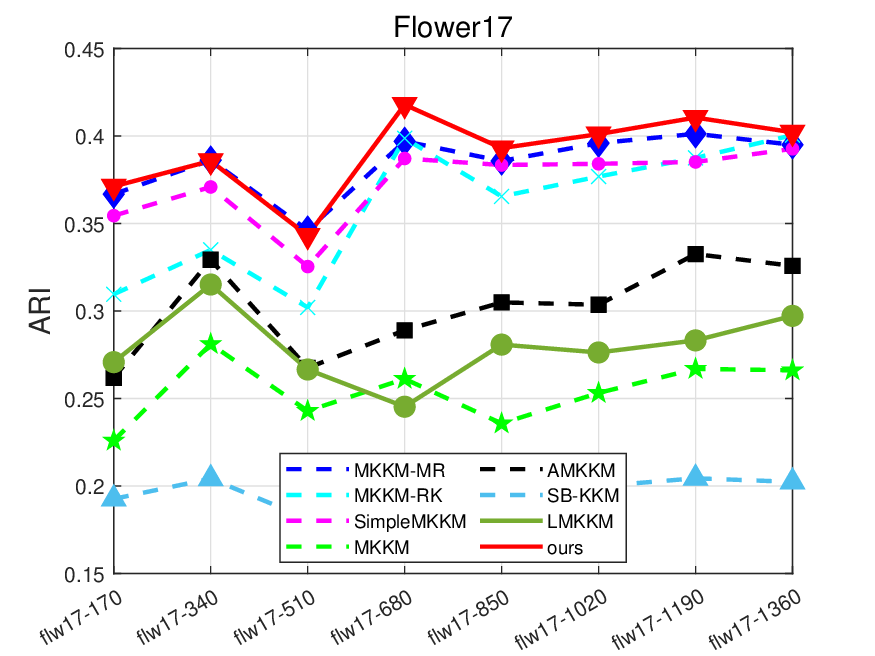}\\
    (c) PUR on the flower17 & (d) ARI on the flower17\\
    \includegraphics[width=.34\linewidth, trim={0 0 30 0},clip]{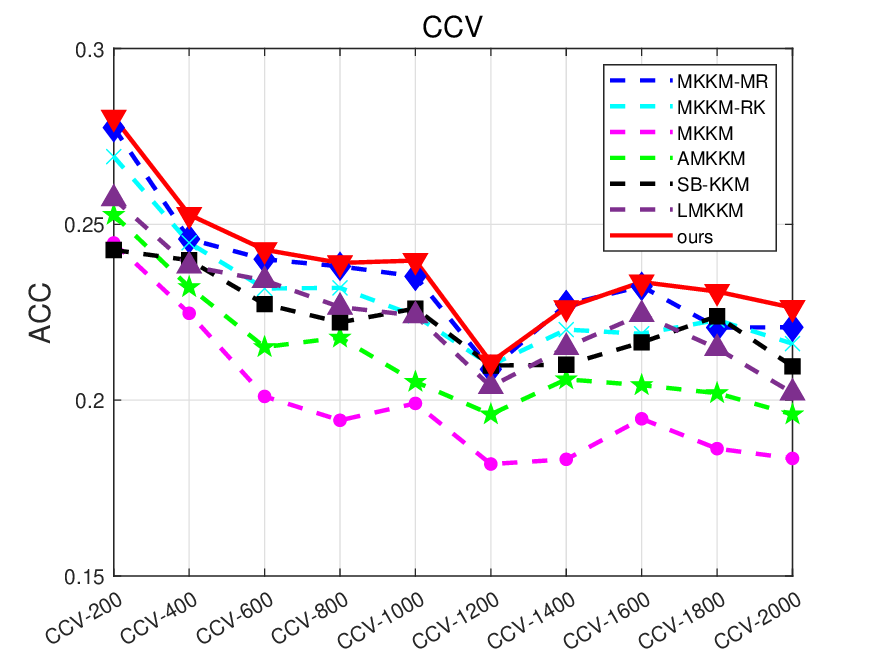} & \includegraphics[width=.34\linewidth, trim={0 0 30 0},clip]{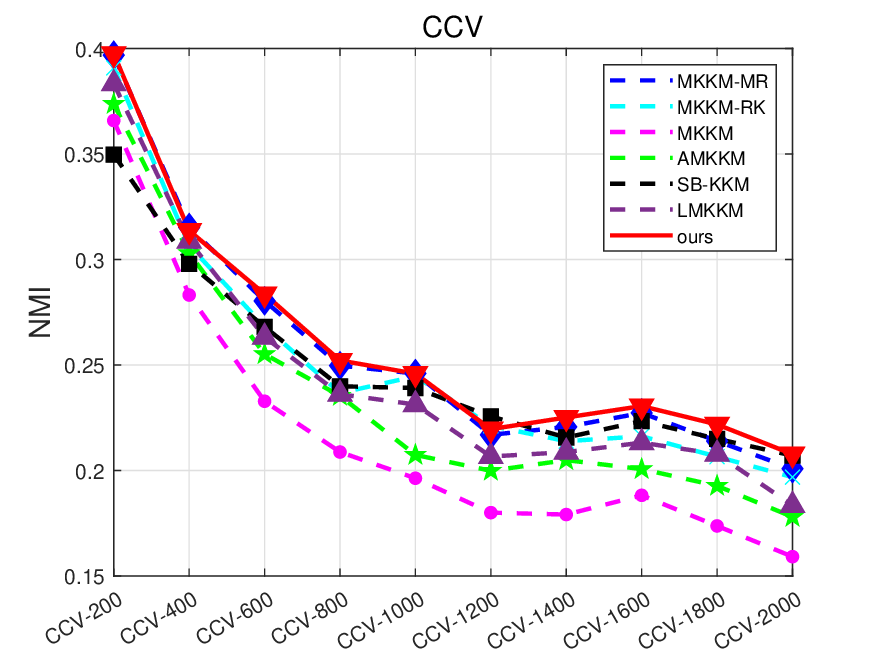} \\ 
    (e) ACC on the CCV & (f) NMI on the CCV \\
    \includegraphics[width=.34\linewidth, trim={0 0 30 0},clip]{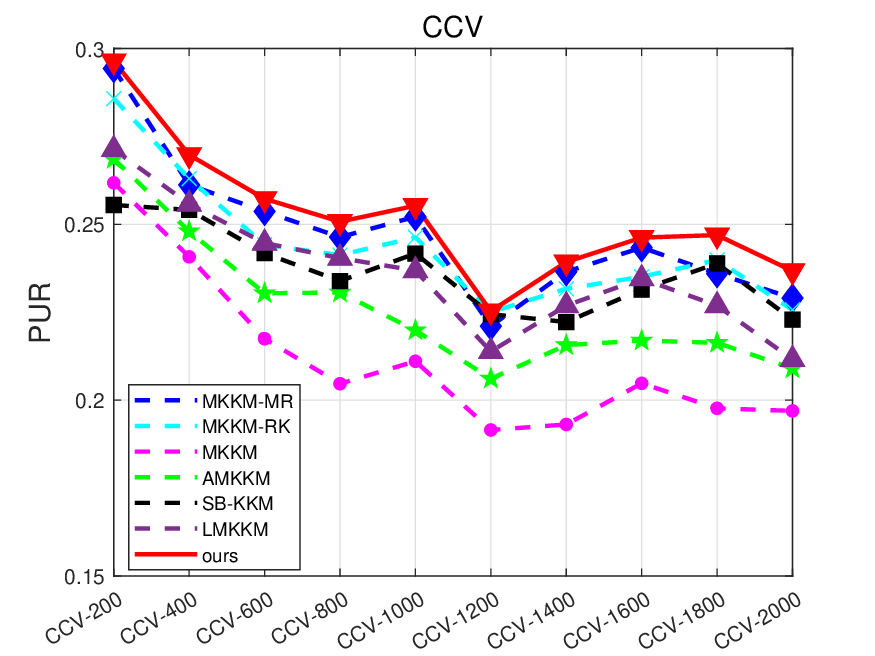} & \includegraphics[width=.34\linewidth, trim={0 0 30 0},clip]{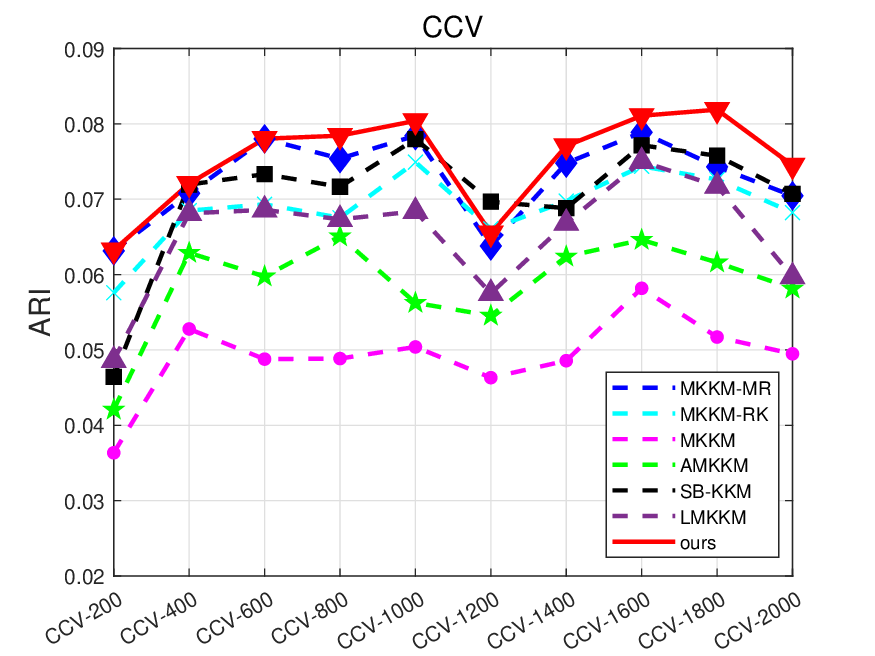}\\
    (g) PUR on the CCV & (h) ARI on the CCV \\
    \end{tabular}
    }
    \caption{
The trends in the 4 clustering metrics for all algorithms with an increasing number of sample elements are examined on the flower17 and CCV datasets. 
    }
    \label{CCV_flower_trends}
\end{figure*}

As observed from the figure, our proposed method consistently outperforms or demonstrates comparable performance to the MKM-MR and LMKKM methods for the majority of sample sizes. Exceptions arise when the sample size reaches 510 for the flower17 dataset and 1200 for the CCV dataset, where our method exhibits a minor underperformance across these four metrics in comparison to the mentioned methods. This observation underscores the importance of accounting for kernel correlation and dissimilarity to attain optimal clustering results.

Moreover, our approach consistently surpasses the single kernel method in performance. This discovery signifies that multiple kernel clustering methods provide a more comprehensive information framework compared to single kernel approaches. Effective fusion of multiple kernels can consequently lead to enhanced clustering outcomes.

\begin{figure}[t]
    \centering
    \renewcommand\arraystretch{0.8}
    {\fontsize{10pt}{\baselineskip}\selectfont 
    \begin{tabular}{cc}
    \includegraphics[height = 0.2\textheight, trim={30 0 50 0},clip]{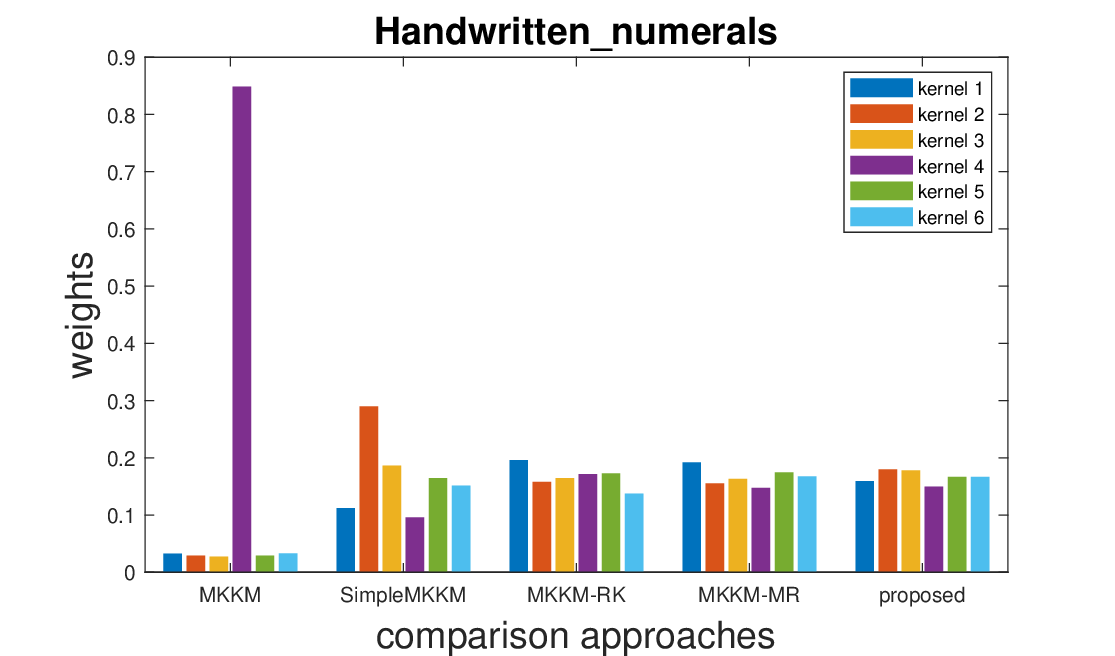} &
    \includegraphics[height = 0.2\textheight, trim={45 0 50 0},clip]{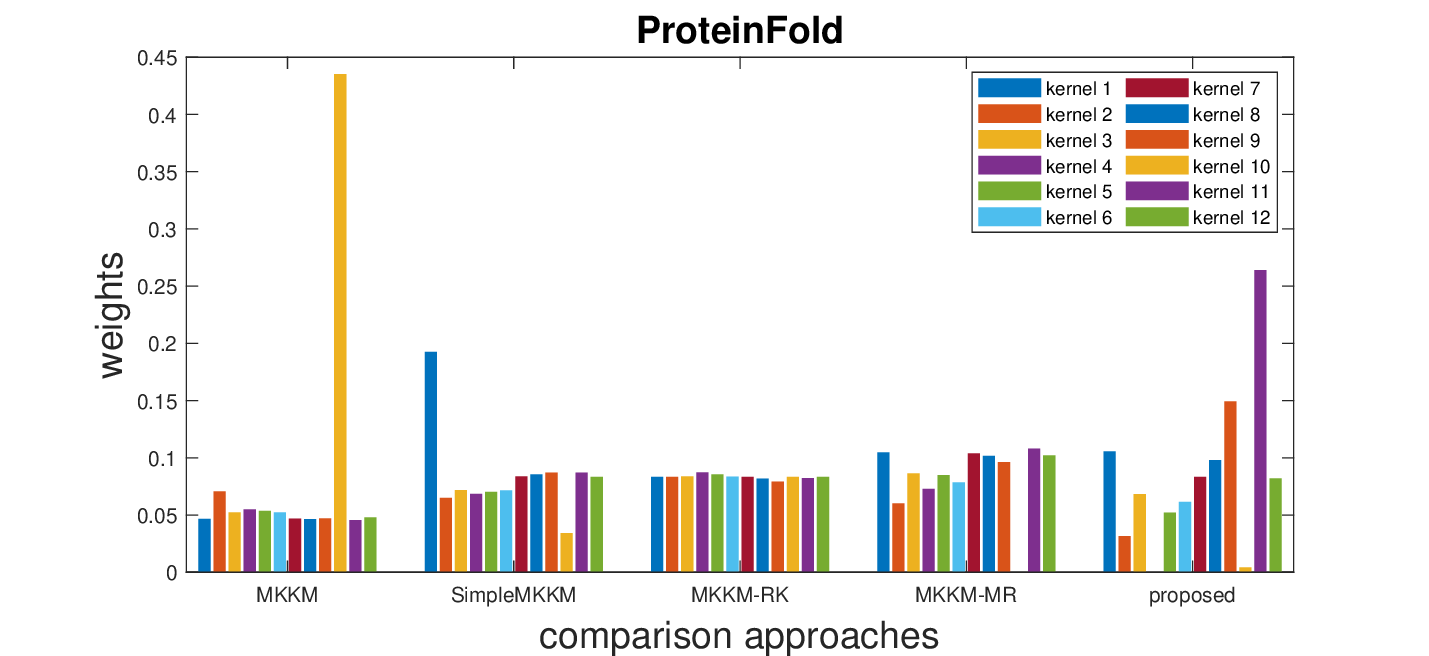} \\
    (a) Handwritten\_numerals & (b) ProteinFold
    \end{tabular}
    }
    \caption{
    A comparison of the learned kernel weights among different algorithms is conducted on 2 datasets. 
    }
    \label{kernelweight}
\end{figure}
\begin{figure}[!t]
    \centering
    \renewcommand\arraystretch{0.8}
    {\fontsize{10pt}{\baselineskip}\selectfont 
    \begin{tabular}{ccc}
    \includegraphics[width=.3\linewidth, trim={0 0 30 0},clip]{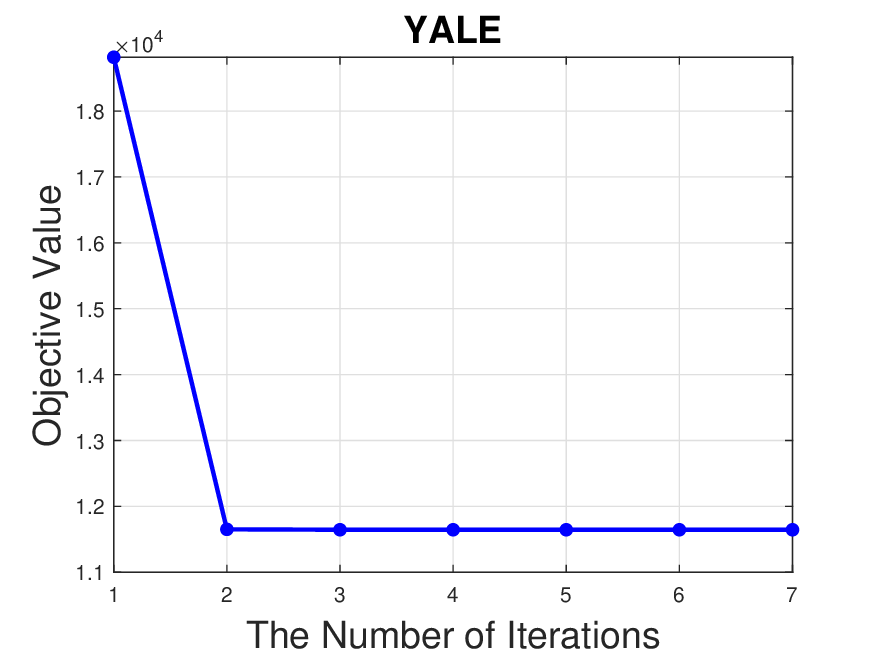} & \includegraphics[width=.3\linewidth, trim={0 0 30 0},clip]{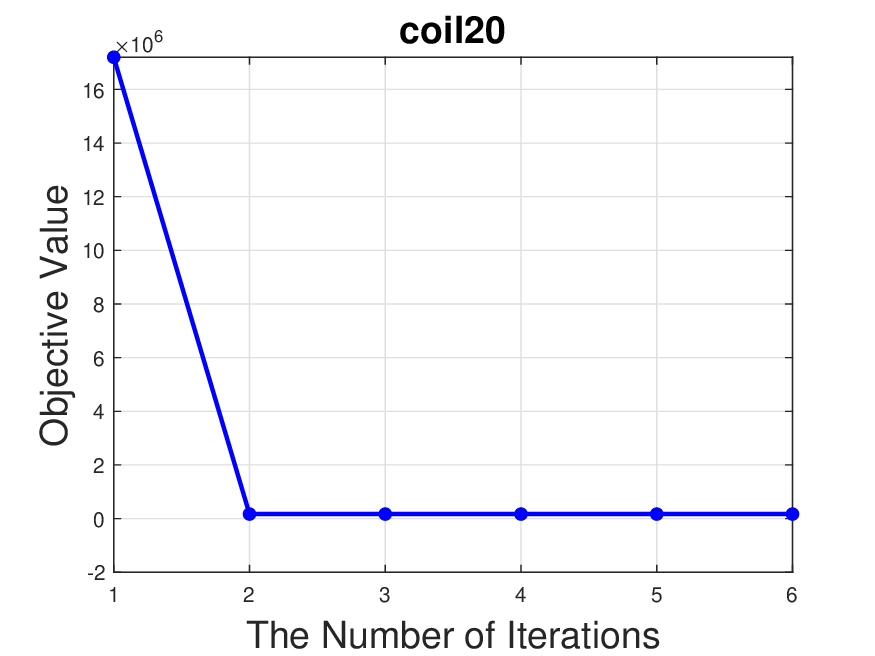} & \includegraphics[width=.3\linewidth, trim={0 0 30 0},clip]{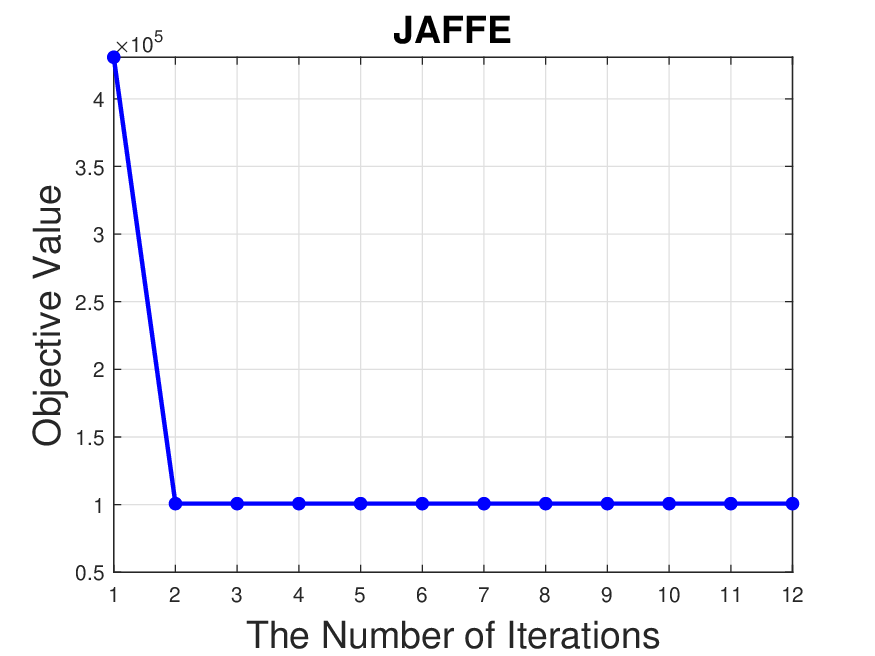} \\ 
    (a) YALE & (b) Coil20 & (c) JAFFE \\
    \includegraphics[width=.3\linewidth, trim={0 0 30 0},clip]{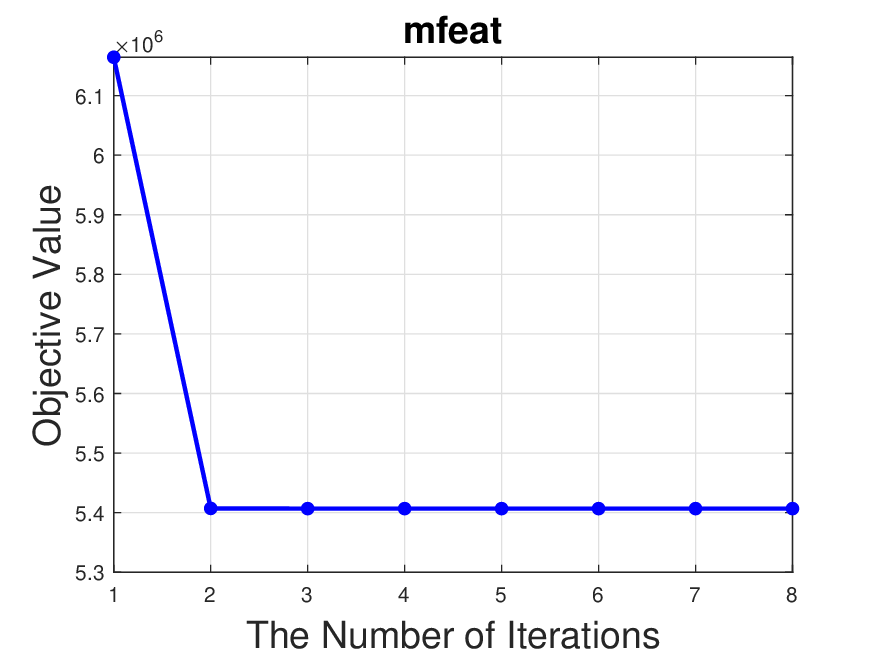} & \includegraphics[width=.3\linewidth, trim={0 0 30 0},clip]{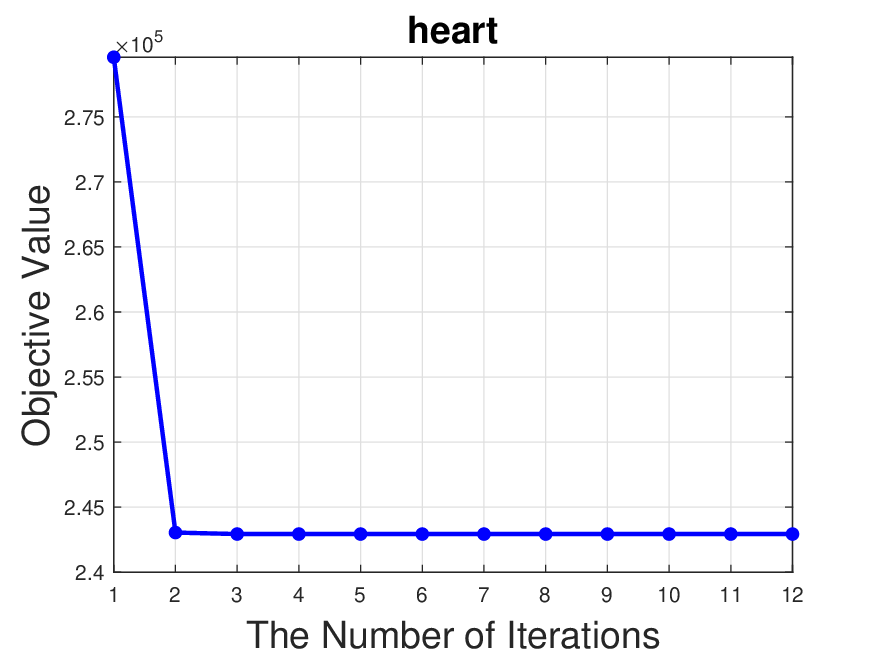} & \includegraphics[width=.3\linewidth, trim={0 0 30 0},clip]{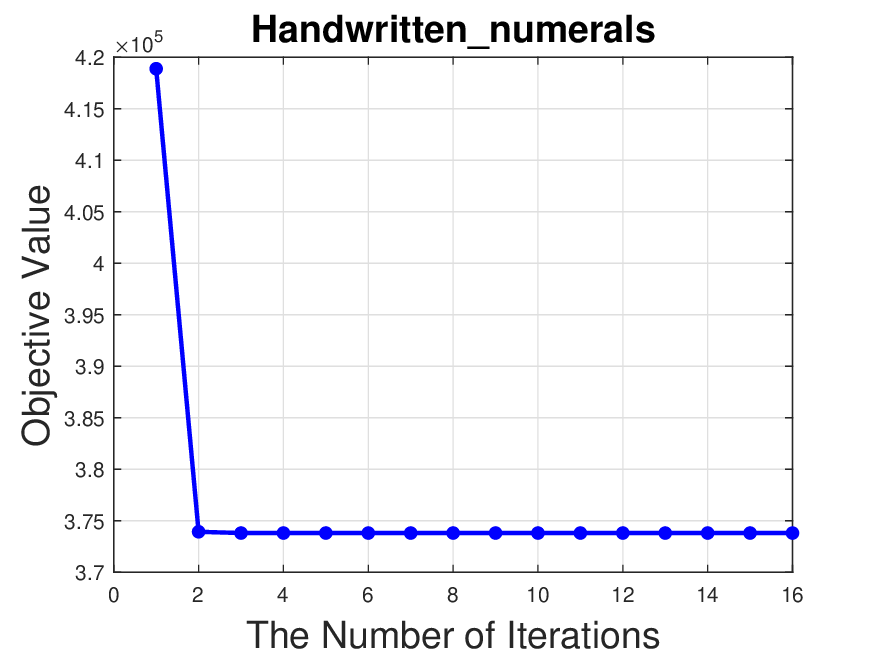} \\
    (d) mfeat & (e) heart & (f) handwritten\_numerals \\
    \end{tabular}
    }
    \caption{
        The iterative trend of the objective function for the proposed model is examined on 6 datasets. 
    }
    \label{objective_iter}
\end{figure}

\subsubsection{Kernel Weights Analysis}

Upon comparing the distribution of kernel weights across all algorithms, a substantial distinction was evident in the weight distribution of our proposed method, particularly in the case of the ProteinFold dataset, as depicted in Figure~\ref{kernelweight}. This highlights the significance of applying weight penalization to efficiently harness the correlation and dissimilarity inherent in the kernel matrix. Moreover, this observation illustrates that the kernel matrix can be harnessed to further amplify the quality of clustering outcomes. This strategy is pivotal since the optimal weight distribution remains unknown beforehand, and weight labels cannot be deduced from optimal clustering outcomes. Consequently, the imposition of penalties on redundant kernel matrices emerges as the singular viable strategy to attain heightened precision in clustering outcomes and extract the utmost insights from the data.

\subsubsection{Convergence analysis}

We examined the convergence behavior of our algorithm's objective function values. Consistent experiments on all datasets demonstrate a monotonic decrease in the objective function value with each iteration until convergence is achieved. The trend, as depicted in Figure~\ref{objective_iter}, illustrates rapid initial decreases in the objective function values, followed by a gradual approach towards near-optimal levels within approximately 2 iterations. The substantial acceleration in convergence leads to a significant reduction in the required computation time, resulting in a highly efficient model. In summary, our model exhibits remarkable convergence properties.

\section{Conclusions}
\label{section5}
This paper explores the interplay between kernel correlation and kernel dissimilarity, examining their relationships from multiple perspectives. We conclude that kernel correlation and dissimilarity extract information from multiple kernels in distinct ways and are not interchangeable. Hence, we introduce a model that seamlessly integrates both kernel correlation and dissimilarity to effectively penalize redundant information across multiple cores, thereby significantly improving clustering accuracy.

To streamline the solution process, we decompose the model into two convex optimization subproblems, enabling the application of an alternating minimization method for resolution. We evaluated our approach using 13 challenging datasets, and the experimental results firmly establish the superiority of our algorithm over state-of-the-art MKKM methods, confirming its effectiveness and excellence.

Our approach addresses the limitations of MKKM in handling kernel redundancy, providing a comprehensive representation of kernel relationships, and consequently achieving superior clustering outcomes that enhance overall clustering performance. However, current multiple kernel clustering methods still face challenges when dealing with large-scale datasets, especially those tailored for improving multiple kernel k-means (MKKM) clustering. These methods encounter hurdles when processing large-scale datasets due to the substantial size and computational intricacies associated with the original kernel matrix.

In future endeavors, it is imperative to enhance the algorithm's computational efficiency to meet the demands of effectively processing large-scale data. Simultaneously, exploring alternative methods for quantifying kernel redundancy becomes paramount, as it holds the potential to augment the algorithm's performance and resilience. Future research directions include a non-parametric exploration of multiple kernel clustering \cite{SimpleMKKM,2022dicsretemkkm}, as well as an investigation of other image segmentation clustering techniques applicable to MKKM. While our method has demonstrated improved performance, further exploration is warranted in the pursuit of developing exceptional multiple kernel clustering algorithms.

\section*{Acknowledgment}
    This work was supported by National Natural Science Foundation of China (No. 12061052), Young Talents of Science and Technology in Universities of Inner Mongolia Autonomous Region (No. NJYT22090), Natural Science Foundation of Inner Mongolia Autonomous Region (No. 2020MS01002), Innovative Research Team in Universities of Inner Mongolia Autonomous Region (No. NMGIRT2207), Inner Mongolia University Independent Research Project (2022-ZZ004), Inner Mongolia University Postgraduate Research and Innovation Programmes (No. 11200-5223737) and the network information center of Inner Mongolia University.
    The authors are also grateful to the reviewers for their valuable comments and remarks.

\bibliographystyle{elsarticle-num-names}
\bibliography{ref}

\end{document}